\theoremstyle{plain}
\newtheorem{theorem}{Theorem}[section]
\theoremstyle{definition}
\theoremstyle{remark}
\definecolor{mypink}{RGB}{255,182,193} 
\definecolor{mydarkgreen}{RGB}{0,100,0}
\definecolor{greenmonk}{HTML}{3ab097}
\definecolor{pinkmonk}{HTML}{e5a5d9}
\definecolor{bluemonk}{HTML}{8787de}
\definecolor{readpinkmonk}{HTML}{d595c9}
\definecolor{emblue}{HTML}{038db2}
\definecolor{compbrown}{HTML}{C73E3A}
\definecolor{boundpurple}{HTML}{800080}
\begin{document}

\twocolumn[
\icmltitle{Deep Learning is Not So Mysterious or Different}

\icmltitlerunning{Deep Learning is Not So Mysterious or Different}

\begin{icmlauthorlist}
\icmlauthor{Andrew Gordon Wilson}{}  \\
New York University
\end{icmlauthorlist}

\icmlcorrespondingauthor{\\Andrew Gordon Wilson}{andrewgw@cims.nyu.edu}
\icmlkeywords{Bayesian model selection, marginal likelihood, Occam's razor, generalization}

\vskip 0.3in
]

\printAffiliationsAndNotice{}

\begin{abstract}
Deep neural networks are often seen as different from other model classes by defying conventional notions of generalization. Popular examples
of anomalous generalization behaviour include benign overfitting, double descent, and the success of overparametrization.
We argue that these phenomena are not distinct to neural networks, or particularly mysterious.
Moreover, this generalization behaviour can be intuitively understood, and rigorously characterized, using long-standing generalization frameworks 
such as PAC-Bayes and countable hypothesis bounds. We present \emph{soft inductive biases} as a key unifying principle in explaining these 
phenomena: rather than restricting the hypothesis space to avoid overfitting, embrace a flexible hypothesis space, with a soft preference 
for simpler solutions that are consistent with the data. This principle can be encoded in many model classes, and thus deep learning is not 
as mysterious or different from other model classes as it might seem. 
However, we also highlight how deep learning is relatively distinct in other ways, such as its ability for representation learning, phenomena such as mode connectivity, and its relative universality.
\end{abstract}

\section{Introduction}

\begin{figure*}[t]
	\centering
	    \hspace{-.6cm}
	\subfigure[Simple Data]{
		\includegraphics[height=0.17\textwidth,width=0.21\textwidth]{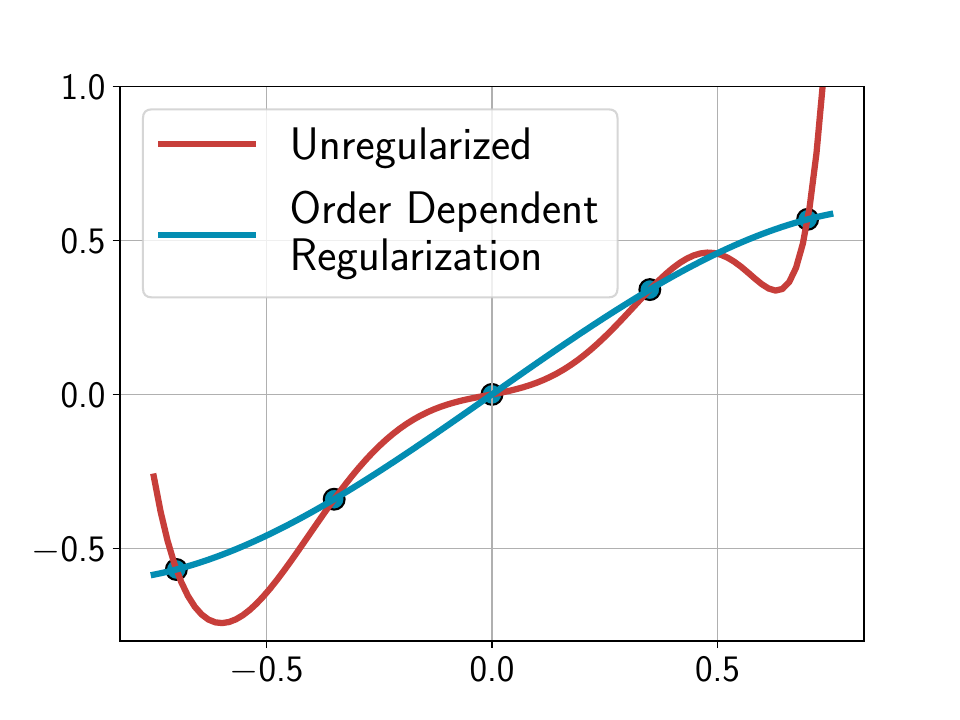}
		\label{fig: gpprior}
    }
    \hspace{-.7cm}
	\subfigure[Structured Data]{
	        \includegraphics[height=0.17\textwidth,width=0.21\textwidth]{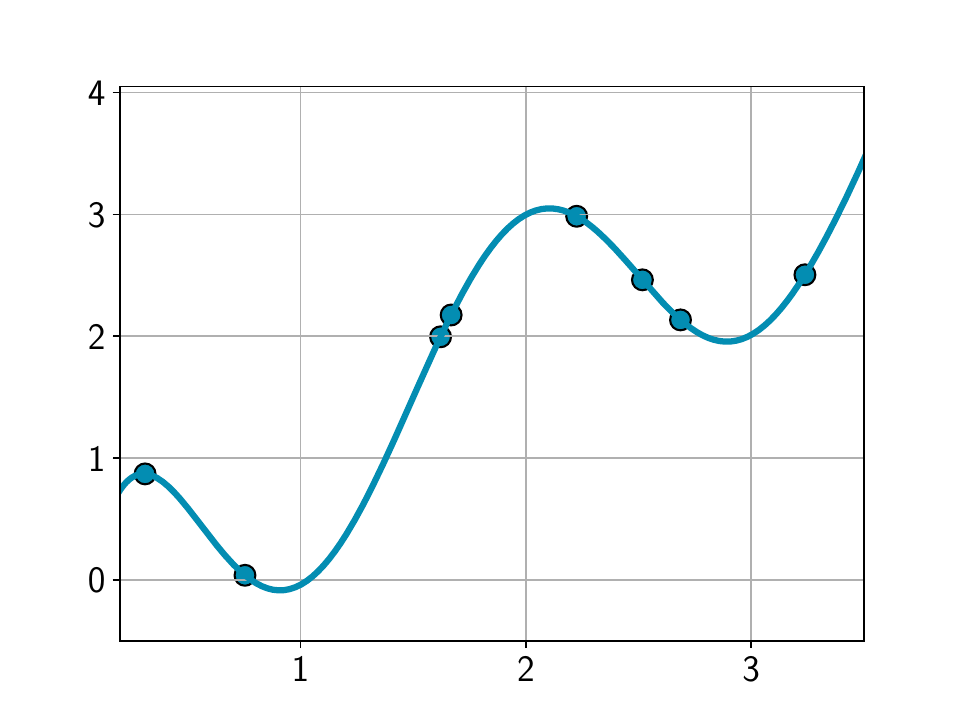}
	        		\label{fig: gpfit}
    }
    \hspace{-.7cm}
	\subfigure[Noisy Data]{
		\includegraphics[height=0.17\textwidth,width=0.21\textwidth]{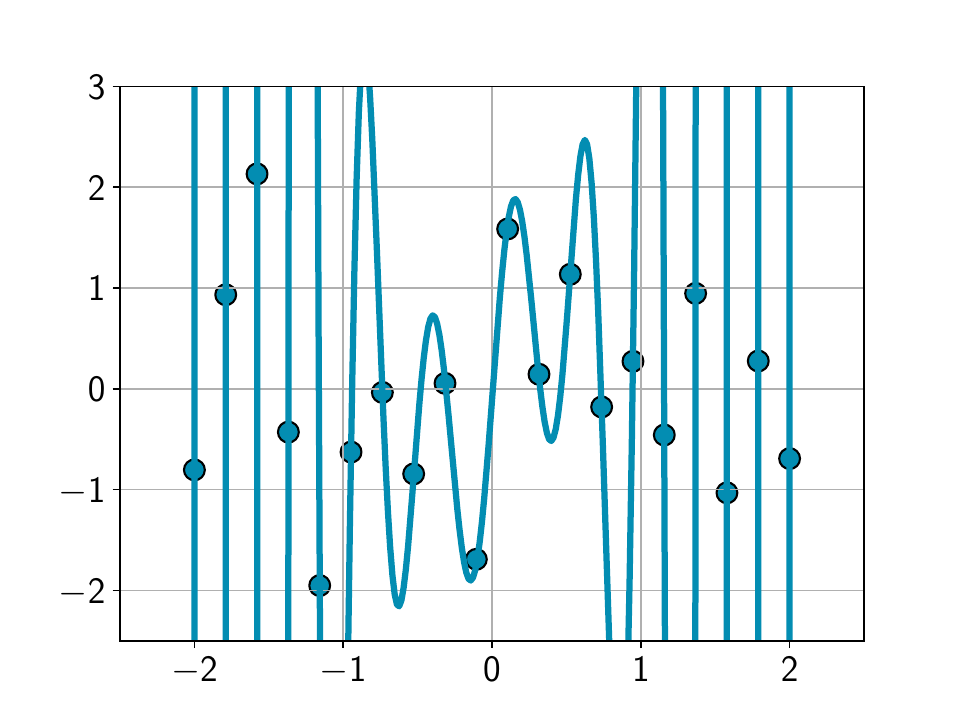}
		\label{fig: gpnoise}
	}
	    \hspace{-0.6cm}
         \subfigure[GP on CIFAR-10]{
        \includegraphics[height=0.17\textwidth]{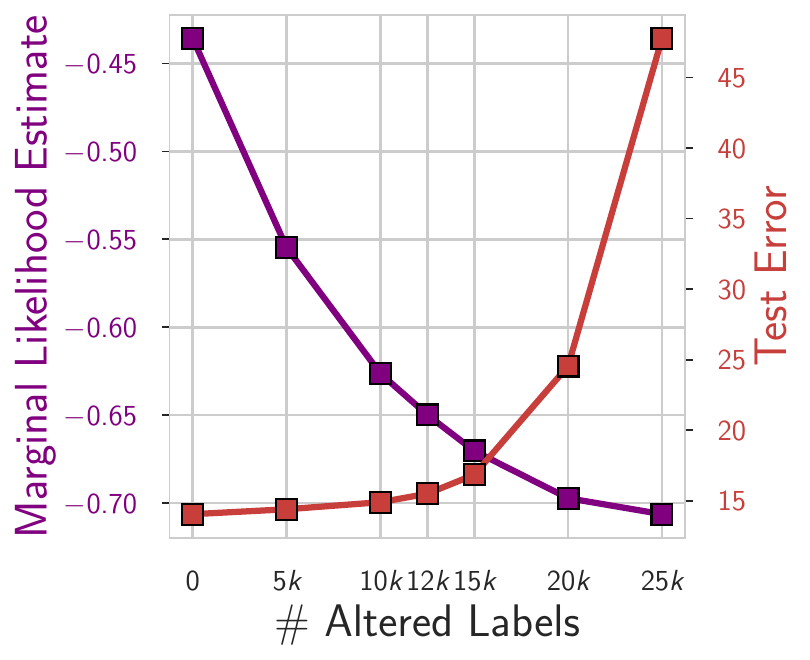}
        \label{fig: gpmarg}
    }
		    \hspace{-0.4cm}
	\subfigure[ResNet-20 on CIFAR-10]{
        \includegraphics[height=0.17\textwidth]{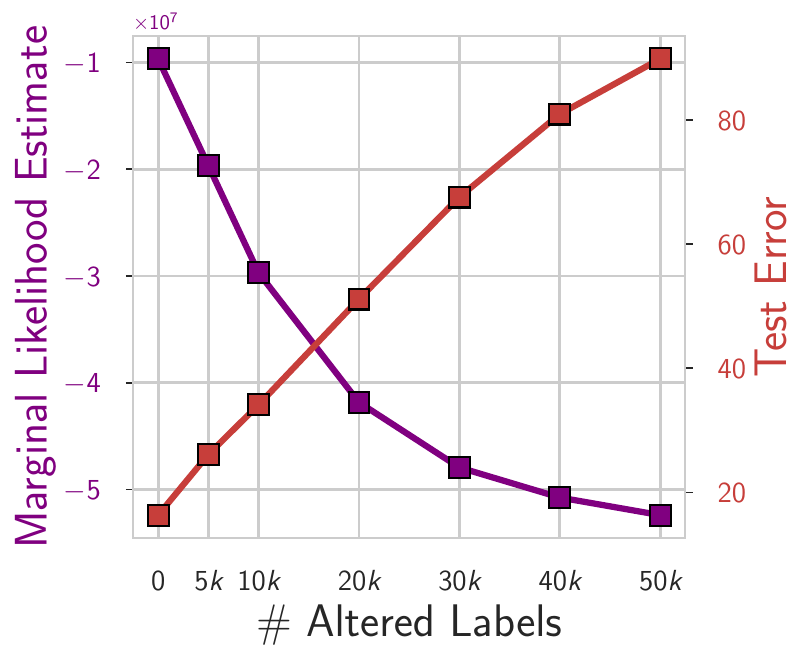}

    }
    \subfigure[ResNet-18]{
        \includegraphics[height=0.20\textwidth]{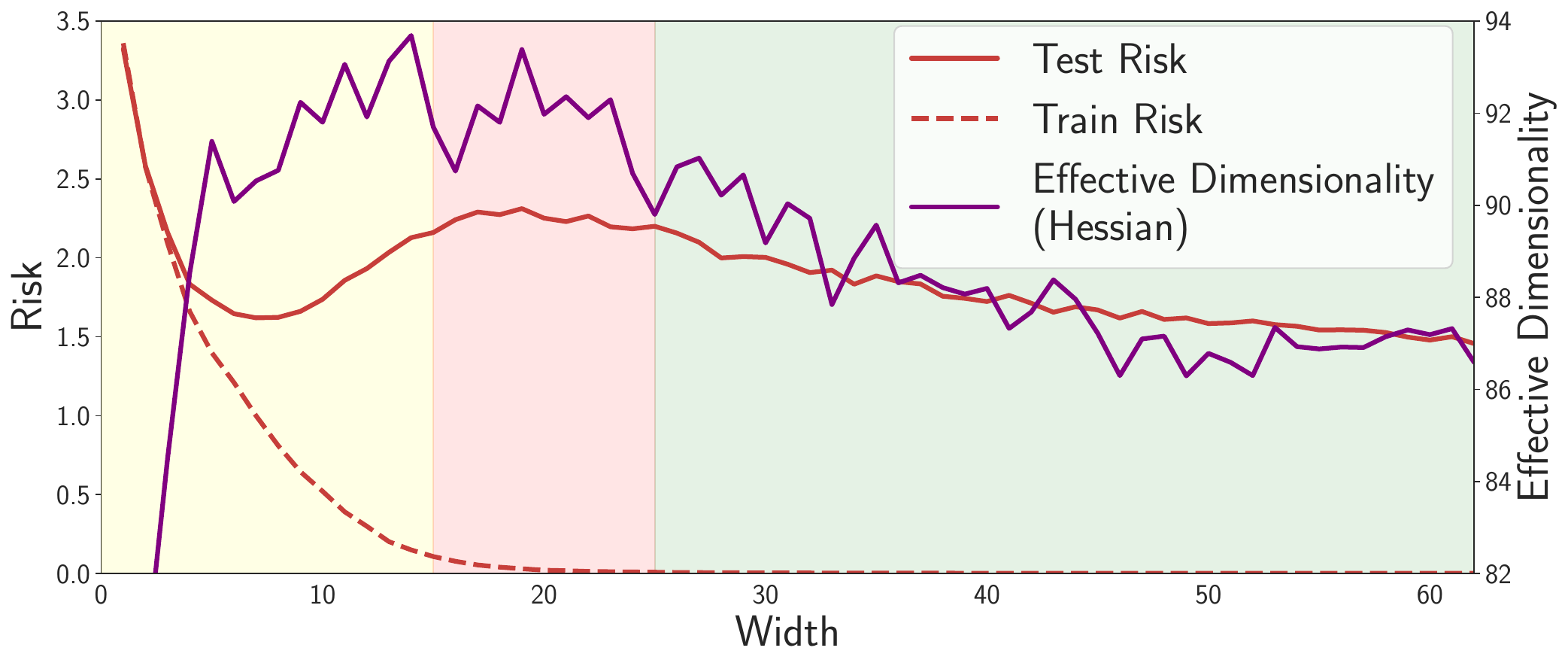}
    }
        \subfigure[Linear Random Features]{
        \includegraphics[height=0.20\textwidth]{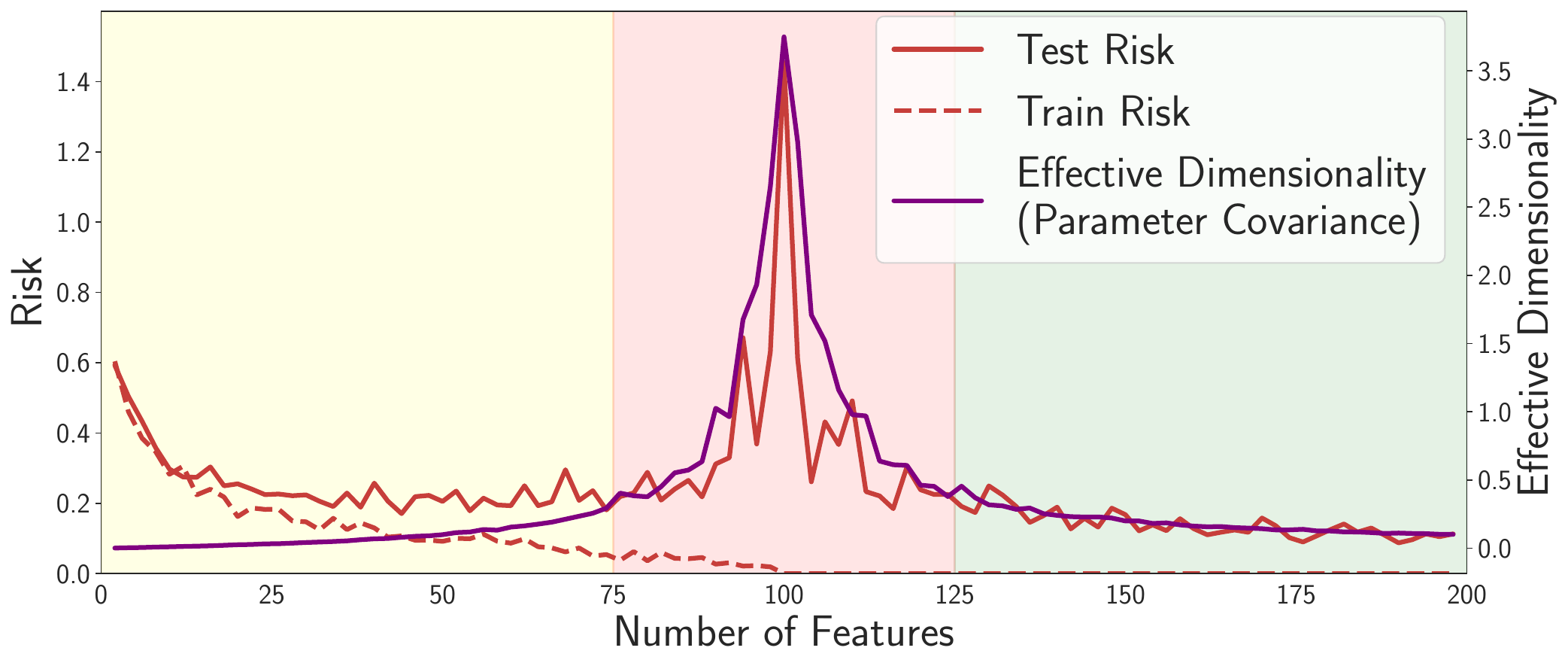}
                \label{fig: bnnmarg}
    } \\
	\caption{\textbf{Generalization phenomena associated with deep learning can be reproduced with simple linear models and understood.}  \\
	\textbf{Top: Benign Overfitting.} A $150th$ order polynomial with order-dependent regularization reasonably describes (a) simple and (b)
	complex structured data, while also being able to perfectly fit (c) pure noise. (d) A Gaussian process exactly reproduces the CIFAR-10
	results in \citet{zhang2016understanding}, perfectly fitting noisy labels, but still achieving reasonable generalization. Moreover, for both the
	GP and (e) ResNet, the marginal likelihood, directly corresponding to PAC-Bayes bounds \citep{germain2016pac}, decreases with more altered labels, as in \citet{wilson2020bayesian}. \textbf{Bottom: Double Descent.} Both the (f) ResNet and (g) linear random feature 
	model display double descent, with effective dimensionality closely tracking the second descent in the low training loss regime 
	as in \citet{maddox2020rethinking}.
	}
        \label{fig:phenomena}
\end{figure*}

\begin{figure}[h]
    \centering
    \includegraphics[height=0.22\textwidth]{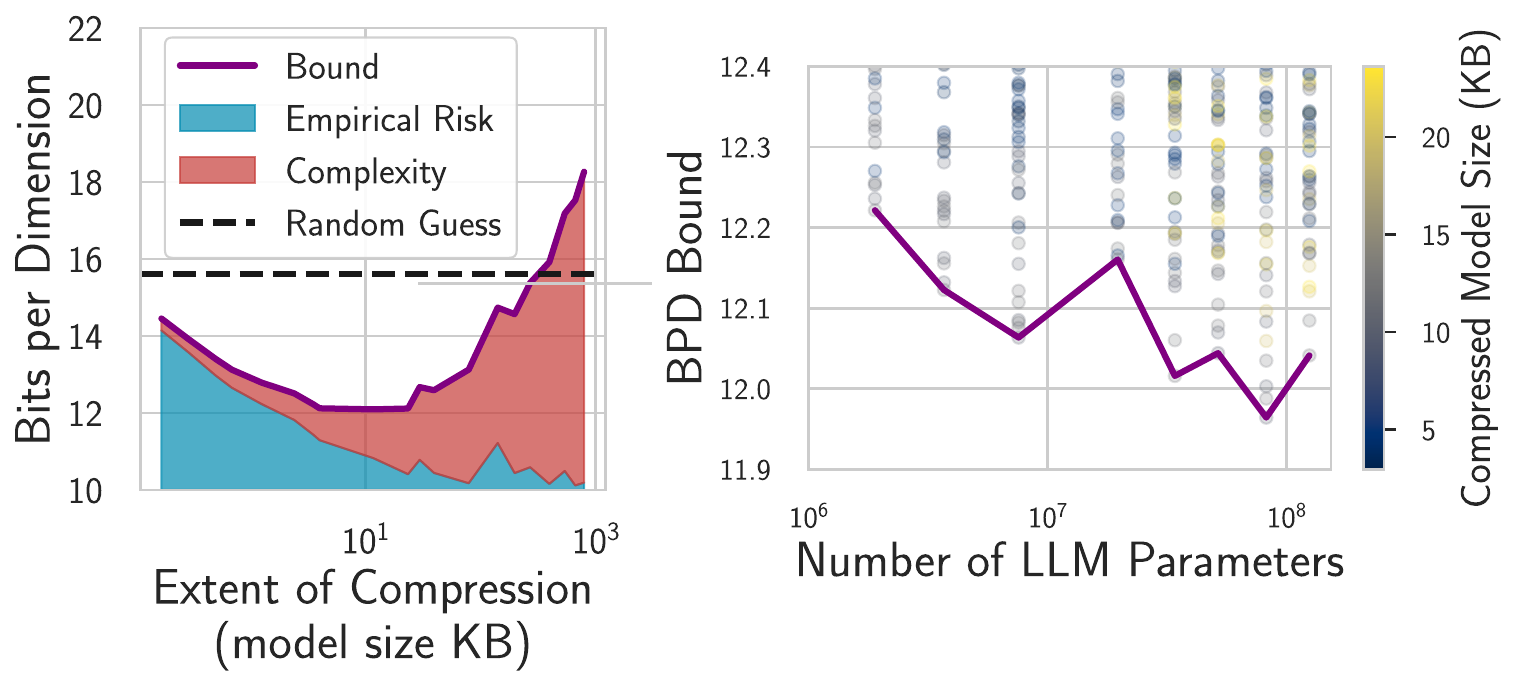}
\vspace{-9mm}
    \begin{align}
    \textcolor{boundpurple}{\overbrace{{R(h)}}^{\text{expected risk}}} 
    \leq 
    \textcolor{emblue}{\overbrace{{\hat{R}(h)}}^{{\text{empirical risk}}}} 
    + \quad 
    \textcolor{compbrown}{\overbrace{\Delta \sqrt{\frac{K(h)\log 2 + \log \frac{1}{\delta}}{2n}}}^{\text{compression}}}
    \notag
    \end{align}
\vspace{-7mm}
    \caption{\textbf{Generalization phenomena can be formally characterized by generalization bounds.} Generalization can be upper bounded by the empirical risk and 
     compressibility of a hypothesis $h$, as in Section~\ref{sec: pacbayes}. The compressibility, formalized in terms of Kolmogorov complexity $K(h)$, can be further upper
     bounded by a model's filesize. Large models fit the data well, and can be effectively compressed to small filesizes. Unlike Rademacher complexity, these bounds
     do not penalize a model for having a hypothesis space $\mathcal{H}$ that can fit noise, and describe benign overfitting, double
     descent, and overparametrization. They can even provide non-vacuous bounds on LLMs, as in \citet{lotfi2023llm} above.}
    \label{fig:boundfigure}
\end{figure}

\emph{``The textbooks must be re-written!''} 

Deep neural networks are often considered mysterious and different from other model classes, with behaviour that can defy the conventional wisdom about generalization. When asked what makes deep learning different, it is common to point to phenomena such as \emph{overparametrization}, \emph{double descent}, and \emph{benign overfitting} \citep{zhang2021understanding, nakkiran2020deep, belkin2019reconciling, shazeer2017outrageously}. 

\textbf{Our position is that none of these phenomena are distinct to neural networks, or particularly mysterious.} Moreover, while some generalization frameworks such as VC dimension \citep{vapnik1998adaptive} and Rademacher complexity \citep{bartlett2002rademacher} do not explain these phenomena, they are \textbf{formally described by other long-standing frameworks} such as PAC-Bayes \citep{mcallester1999pac, catoni2007pac, dziugaite2017computing}, and even simple countable hypothesis generalization bounds \citep{valiant1984theory, shalev2014understanding, lotfi2023llm}. \emph{In other words, understanding deep learning does not require rethinking generalization, and it never did}.

\begin{figure}[!h]
    \centering
    \includegraphics[width=0.4\textwidth]{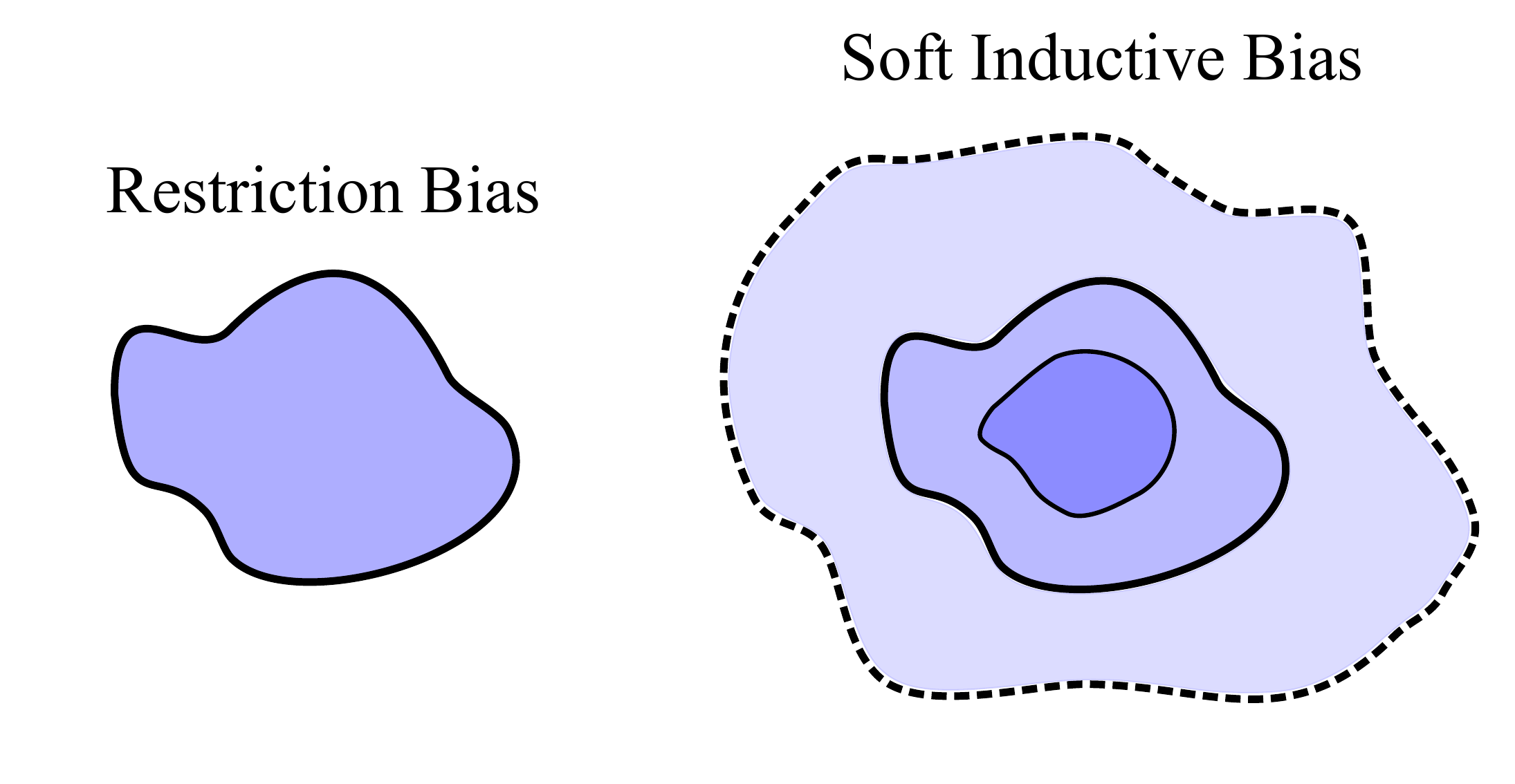}
    \caption{
        \textbf{Soft inductive biases enable flexible hypothesis spaces without overfitting.} Many generalization phenomena can be understood through the notion of \emph{soft inductive biases:} rather than restricting the solutions a model can represent, specify a preference for certain solutions over others. In this conceptualization, we enlarge the hypothesis space with hypotheses that have lower preference in lighter blue, rather than restricting them entirely. There are many ways to implement soft inductive biases. Rather than use a low order polynomial, use a high order polynomial with order-dependent regularization. Alternatively, rather than restrict a model to translation equivariance (e.g., ConvNet), 
        have a preference for invariances through a compression bias (e.g., a transformer, or RPP with ConvNet bias). Overparametrization is yet another way to implement a soft bias.}
    \label{fig:softbias}
\end{figure}

We are \emph{not} aiming to argue that deep learning is fully understood, to comprehensively survey works on understanding deep learning phenomena, or to assign historical priority to any work for explaining some phenomenon. We are also not claiming to be the first to note that any of these phenomena can be reproduced using other model classes. In fact, we want to make clear that there has been significant progress 
in understanding what is often perceived as mysterious generalization behaviour in deep learning, and contrary to common belief, much of this behaviour applies outside of deep learning and can be formally explained using frameworks that have existed for decades. The textbooks 
wouldn't need to be re-written had they paid attention to what was already known about generalization, decades ago! Instead, we need to bridge communities, and acknowledge progress. 

Indeed, we will aim to introduce the \emph{simplest} examples possible, often basic linear models, to replicate these phenomena and explain the intuition behind them. The hope is that by relying on particularly simple examples, we can drive home the point that these generalization behaviours are hardly distinct to neural networks and can in fact be understood with basic principles. For example, in Figure~\ref{fig:phenomena}, we show that benign overfitting and double descent can be reproduced and explained with simple linear models.

We will also treat all of these phenomena collectively, through a unifying notion of \emph{soft inductive biases}. While \emph{inductive biases} are often thought of as \emph{restriction biases} --- constraining the size of a hypothesis space for improved data efficiency and generalization --- there is no need for restriction biases. Instead, we can embrace an arbitrarily flexible hypothesis space, combined with soft biases that express a preference for certain solutions over others, without entirely ruling out any solution, as illustrated in Figure~\ref{fig:softbias}. Frameworks such as PAC-Bayes embody this view of inductive biases, capable of producing non-vacuous generalization bounds on models with even billions of parameters, as long as these models have a prior preference for certain solutions over others \citep{lotfi2024unlocking}. Broadly speaking, a large hypothesis space, combined with a preference for simple solutions, provides a provably useful recipe for good performance, as in Figure~\ref{fig:boundfigure}.

There are also other phenomena of recent interest, such as \emph{scaling laws} and \emph{grokking}, which are not our focus, because these are not typically treated as inconsistent with generalization theory, or distinct to neural networks. However, we note the PAC-Bayes and countable hypothesis generalization frameworks of Section~\ref{sec: generalization} also describe LLMs, and even Chinchilla scaling laws \citep{hoffmann2022training, finzi2025}. Moreover, deep learning of course \emph{is} different in other ways. In Section~\ref{sec: different}, we discuss relatively distinctive features of deep neural networks, such as representation learning, mode connectivity, and broadly successful in-context learning.

We open with a discussion of soft inductive biases in Section~\ref{sec: soft}, which provide a unifying intuition throughout the paper. We then briefly introduce several general frameworks and definitions in Section~\ref{sec: generalization}, preliminaries through which we examine generalization phenomena in the next sections. Throughout the paper, we particularly contrast PAC-Bayes and the countable hypothesis frameworks in Section~\ref{sec: pacbayes}, which do characterize these generalization phenomena, with other generalization frameworks such as Rademacher complexity and VC dimension in Section~\ref{sec: other} which do not. We then discuss benign overfitting, overparametrization, double descent in Sections~\ref{sec: benign}, \ref{sec: overparametrization}, \ref{sec: double}, alternative views in Section~\ref{sec: alternative}, and distinctive features and open questions in Section~\ref{sec: different}.

\section{Soft Inductive Biases}
\label{sec: soft}

We often think of inductive biases as \emph{restriction biases}: constraints to the hypothesis space aligned with a problem of interest. In other words, there are many settings of parameters that may fit the data and provide poor generalization, so restrict the hypothesis space to settings of parameters that are more likely to provide good generalization for the problem we are considering. Moreover, since the hypothesis space is smaller, it will become more quickly constrained by the data, since we have fewer solutions to ``rule out'' with the addition of new data points. Convolutional neural networks provide a canonical example: we start from an MLP, remove parameters, and enforce parameter sharing, to provide a hard constraint for locality and translation equivariance. 

But restriction biases are not only unnecessary, they are arguably undesirable. We want to support any solution that could describe the data, which means embracing a flexible hypothesis space. For example, we may 
suspect the data are only approximately translation equivariant. We can instead bias the model towards translation equivariance without any hard constraint. A naive way to provide a soft ConvNet bias would be to start with an MLP, and then introduce a regularizer that penalizes both the norms of any parameters that do not exist in a ConvNet, and the distance between any parameters that would otherwise be shared in a ConvNet. We can control this bias through the strength of the regularization. \emph{Residual pathway priors} provide a more practical and general mechanism for turning hard architectural constraints into soft inductive biases \citep{finzi2021residual}.

\begin{figure}[!t]
    \centering
    \includegraphics[width=0.49\textwidth]{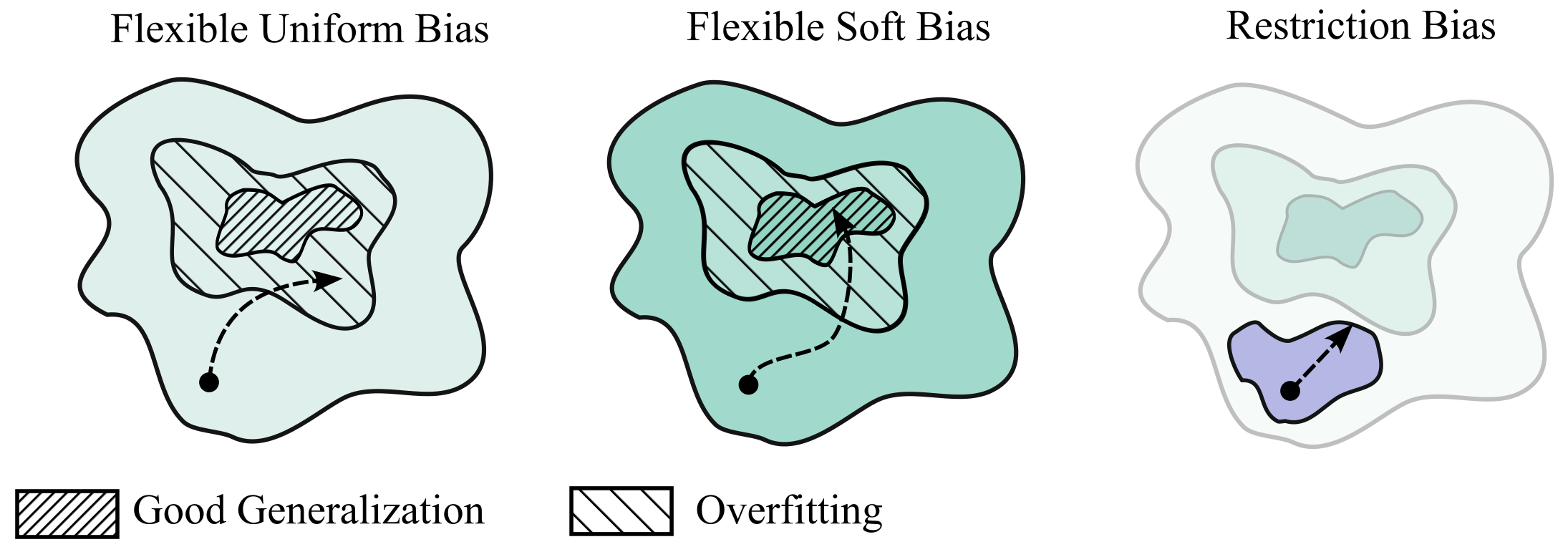}
    \caption{\textbf{Achieving good generalization with soft inductive biases.} \textbf{Left:} A large hypothesis space, but no preference amongst solutions that provide the same fit to the data. Therefore, training will often lead to overfit solutions that generalize poorly. \textbf{Middle:} Soft inductive biases guide training towards good generalization by representing a flexible hypothesis space in combination with preferences between solutions, represented by different shades. \textbf{Right:} Restricting the hypothesis space can help prevent overfitting by only considering solutions that have certain desirable properties. However, by limiting expressiveness, the model cannot capture the nuances of reality, hindering generalization. 
}
    \label{fig:softgeneralization}
\end{figure}

We refer to the general idea of having a preference for certain solutions over others, even if they fit the data equally well, as a \emph{soft inductive bias}. We contrast soft biases with more standard restriction biases, which instead place hard constraints on the hypothesis space. We illustrate the concept of soft inductive biases in Figure~\ref{fig:softbias}, and show how soft inductive biases influence the training process in Figure~\ref{fig:softgeneralization}. Regularization, as well as Bayesian priors over model parameters, provide mechanisms for creating soft inductive biases. However, regularization is not typically used to relax architectural constraints, and as we will see, soft biases are more general, and can be induced by the architecture. 

As a running example, consider a large polynomial, but where we regularize the higher order coefficients more than the lower order coefficients. In other words, we fit the data with $f(x,w) = \sum_{j=0}^J w_j x^j$ and we have a regularizer on $w_j$ that increases in strength with $j$. Finally, we have a data fit term that is formed from a likelihood involving $f(x,w)$, $p(y|f(x,w))$. So our total loss is: 
\begin{align}
\text{Loss} = \text{data fit} + \text{order dependent complexity penalty} \notag
\end{align}
which, for example, could take the form
$\mathcal{L}(w) = -\log p(y|f(x,w)) + \sum_j \gamma^{j} w_j^2 \,, \gamma > 1$.
For classification, the observation model $p(y_i | f(x_i,w)) = \text{softmax}(f(x_i,w))$ would give rise to cross-entropy for $-\log p(y|f(x,w))$. In regression, $p(y_i | f(x_i,w)) = \mathcal{N}(f(x_i,w),\sigma^2)$ would give rise to the squared error data fit, divided by $1/(2\sigma^2)$.

\begin{figure*}[h]
    \centering
    \subfigure[Degree 2 Polynomial]{
        \includegraphics[width=0.31\textwidth]{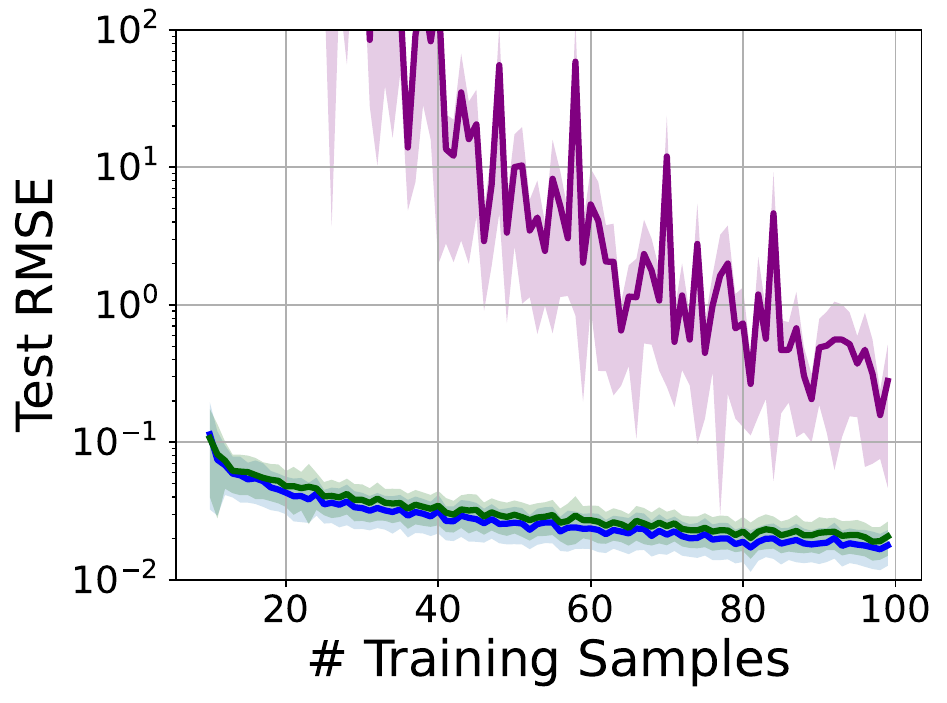}
        \label{fig:subfig1}
    }
    \hfill
    \subfigure[Degree 15 Polynomial]{
        \includegraphics[width=0.31\textwidth]{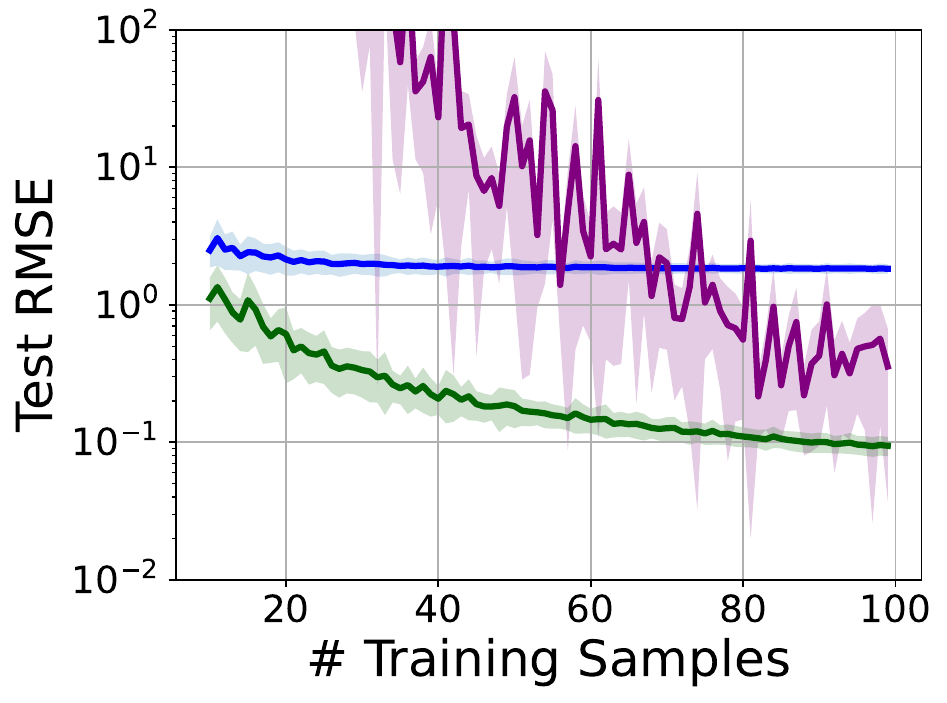}
        \label{fig:subfig2}
    }
    \hfill
    \subfigure[Cosine]{
        \includegraphics[width=0.31\textwidth]{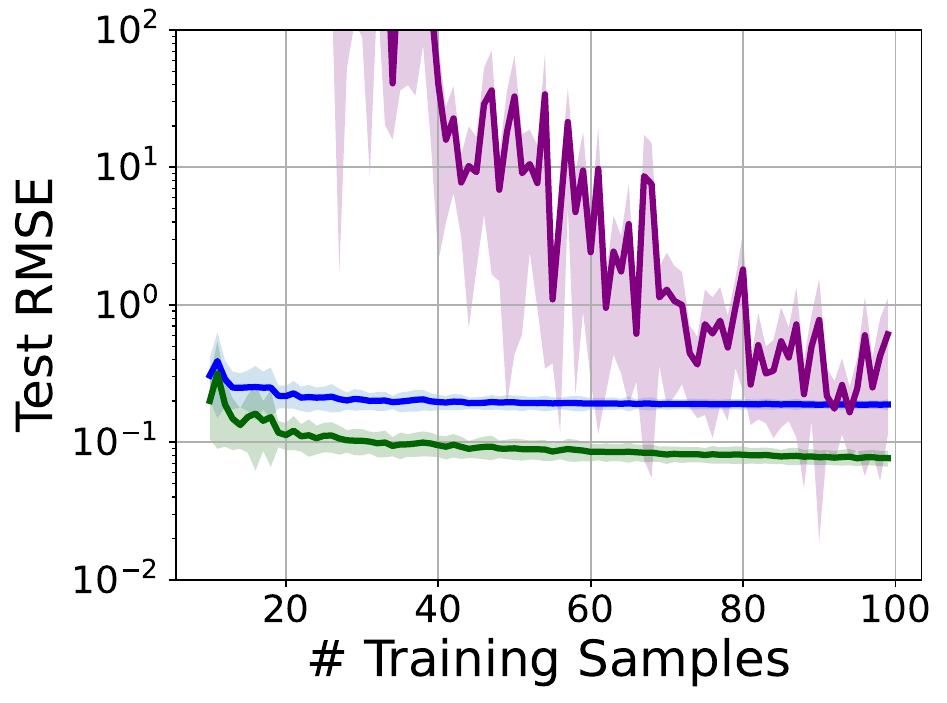}
        \label{fig:subfig3}
    }
    \vspace{-1mm} 
    \includegraphics[width=0.6\textwidth]{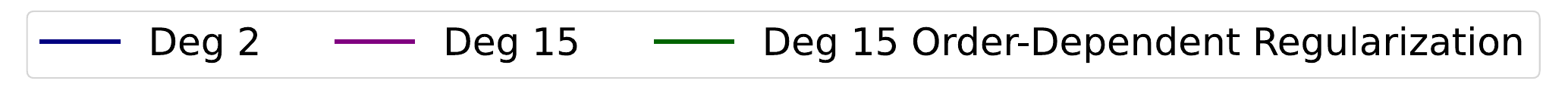}
    \vspace{-2mm} 
    \caption{\textbf{Flexibility with a simplicity bias can be appropriate for varying data sizes and complexities.} We use 2nd, 15th, and regularized 15th order polynomials to fit three regression problems with varying training data sizes, generated from the functions described in (a)-(c). We use a special regularization penalty that increases with the order of the polynomial coefficient. We show the average performance $\pm$ 1 standard deviation over 100 fits of 100 test samples. By increasing complexity only as needed to fit the data, the regularized 15th order polynomial is as good or better than all other models for all data sizes and problems of varying complexity.}
    \label{fig:allsizes}
\end{figure*}

If we take the order of the polynomial $J$ to be large, then we have a flexible model. But the model has a simplicity bias: due to the order dependent complexity penalty, it will try to fit the data using the lower order terms as much as possible, and then only use the higher order terms if needed. For example, imagine a simple 1d regression problem, where the data fall onto a straight line. For large $J$, there are many settings of the coefficients $\{w_j\}$ that will perfectly fit the data. But the model will prefer the simple straight line fit with $w_j = 0$ for $j \geq 2$ because it's consistent with the data and incurs the lowest penalty, as in Figure~\ref{fig:phenomena} (top left). In effect, we have relaxed the hard restriction bias of a low-order polynomial, and turned it into a soft inductive bias. Such a model is also effective for any size of training set: on small datasets it is competitive with models that have hard constraints, on large datasets it is competitive with relatively unconstrained models, as depicted in Figure~\ref{fig:allsizes}.

While $\ell_2$ and $\ell_1$ (or Lasso) regularization is standard practice, it is not used as a prescription for building models of arbitrary size. The idea of order-dependent regularization is less known. \citet{rasmussen2000occam} show the Bayesian marginal likelihood (evidence), the probability of generating the training data from the prior, favours higher-order Fourier models with a similar order-dependent parameter prior. A prior over parameters $p(w)$ induces a prior over functions $p(f(x,w))$, and from the Bayesian perspective it is this prior over functions that controls the generalization properties of the model \citep{wilson2020bayesian}. An order-dependent prior gives rise to a prior over functions that may likely generate the data, even for high-order models. On the other hand, six years after \citet{rasmussen2000occam}, the canonical textbook \citet{bishop06} argues in Chapter 3, page 168, that the marginal likelihood is aligned with conventional notions of model selection, \emph{precisely because it chooses a polynomial of intermediate order, rather than a small or large polynomial}. In actuality, this textbook result is simply an artifact of a bad prior: it uses an isotropic parameter prior (analogous to $\ell_2$ regularization), and a high-order polynomial with an isotropic parameter prior is unlikely to generate the data. Had \citet{bishop06} chosen an order-dependent prior, the marginal likelihood could have preferred an arbitrarily high-order model.

In Residual Pathway Priors (RPP) \citep{finzi2021residual}, it was shown that a soft bias for equivariance constraints is often as effective as a model that had been \emph{perfectly} constrained for a given problem. For example, a soft bias for rotation equivariance would work as well as a rotationally equivariant model for molecules, which are rotation invariant. After exposure to only a very small amount of data, the soft bias would converge to near-perfect rotation equivariance, since the model is encouraged (but not constrained) to represent the data with symmetries, and it can do so exactly, even with a small amount of data. Moreover, in cases where the data only contained an approximate symmetry, or no symmetry at all, the RPP approach would significantly outperform a model with hard symmetry constraints.
 
Surprisingly, vision transformers after training can be even more translation equivariant than convolutional neural networks \citep{gruver2023liederiv}! This finding may seem impossible, as ConvNets are architecturally constrained to be translation equivariant. However, in practice equivariance is broken by aliasing artifacts. Equivariance symmetries provide a mechanism for compressing the data, and as we will discuss in later sections, transformers have a soft inductive bias for compression.

It is our view that \emph{soft} inductive biases, rather than constraining the hypothesis space, are a key prescription for building intelligent systems. 

\section{Generalization Frameworks}
\label{sec: generalization}

We have so far argued that we intuitively want to embrace a flexible hypothesis space, because it represents our honest beliefs that real-world data will have sophisticated structure. But in order to have good generalization, we must have a prior bias towards certain types of solutions, even if we are allowing for any type of solution. While the generalization phenomena we discuss defy some conventional wisdom around overfitting and notions of generalization such as Rademacher complexity, as argued in \citet{zhang2016understanding, zhang2021understanding}, they are entirely aligned with this intuition. 

It turns out these phenomena are also \textbf{formally characterized} by generalization frameworks that have existed for many decades, including PAC-Bayes \citep{mcallester1999pac, guedj2019primer, alquier2024user} and simple countable hypothesis bounds \citep{valiant1984theory, shalev2014understanding}. We introduce these frameworks in Section~\ref{sec: pacbayes}. We then define \emph{effective dimensionality} in Section~\ref{sec: effdim} which we will return to later in the paper for intuition. Finally, we introduce frameworks that do \emph{not} describe these phenomena in Section~\ref{sec: other}, but have greatly impacted the conventional wisdom in thinking about generalization. 

This section briefly introduces some definitions and generalization frameworks --- preliminaries through which we will examine generalization phenomena in later sections.

\subsection{PAC-Bayes and countable hypothesis bounds}
\label{sec: pacbayes}

PAC-Bayes and countable hypothesis bounds provide a compelling approach for large and even overparametrized models, since they are focused on which hypotheses are \emph{likely}, rather than merely the size of the hypothesis space \citep{catoni2007pac, shalev2014understanding, dziugaite2017computing, arora2018stronger, perez2021tighter, lotfi2022pac}. They harmonize with the notion of \emph{soft} inductive biases in Section~\ref{sec: soft}, which provide a mechanism for achieving good generalization with an arbitrarily large hypothesis space combined with preferences for certain solutions over others independently of their fit to the data.

\begin{theorem}[Countable Hypothesis Bound]
Consider a bounded risk $R(h,x) \in [a,a+\Delta]$, and a countable hypothesis space $h\in \mathcal{H}$ for which we have a prior $P(h)$.  Let the empirical risk $\hat{R}(h) = \frac{1}{n}\sum_{i=1}^n R(h,x_i)$ be a sum over independent random variables $R(h,x_i)$ for a fixed hypothesis $h$. Let $R(h) = \mathbb{E}[\hat{R}(h)]$ be the expected risk. Then, 
with probability at least $1-\delta$,
\begin{align}\label{eq:bound}
    R(h) \le \hat{R}(h)+\Delta\sqrt{\frac{\log \frac{1}{P(h)} + \log \frac{1}{\delta}}{2n}}. 
\end{align}
\end{theorem}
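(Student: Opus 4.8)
The plan is to combine a standard concentration inequality applied to a \emph{single} fixed hypothesis with a union bound over the countable hypothesis space, where the key device is to split the allowed failure probability $\delta$ across hypotheses in proportion to the prior $P(h)$, rather than uniformly. This is what converts a cardinality-dependent penalty into one depending on $\log(1/P(h))$.

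First I would fix an arbitrary $h \in \mathcal{H}$. By hypothesis $\hat{R}(h)$ is an average of $n$ independent random variables $R(h,x_i)$, each lying in an interval of length $\Delta$, with common mean $R(h)$. Hoeffding's inequality then gives, for every $t > 0$, $\Pr[\,R(h) - \hat{R}(h) \ge t\,] \le \exp(-2nt^2/\Delta^2)$; only this one-sided bound is needed since the theorem controls $R(h)$ from above. Next, for each $h$ I would choose the hypothesis-specific confidence level $\delta_h := \delta\,P(h)$ and pick $t_h$ so that $\exp(-2n t_h^2/\Delta^2) = \delta_h$. Solving for $t_h$ yields exactly $t_h = \Delta\sqrt{(\log(1/P(h)) + \log(1/\delta))/(2n)}$, the quantity appearing in the statement, so that $\Pr[\,R(h) - \hat{R}(h) \ge t_h\,] \le \delta\,P(h)$ for each fixed $h$.

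Then I would take a union bound over the countable set $\mathcal{H}$: the probability that \emph{some} hypothesis violates its own bound is at most $\sum_{h \in \mathcal{H}} \delta\,P(h) = \delta \sum_{h \in \mathcal{H}} P(h) \le \delta$, using that $P$ is a (sub)probability distribution on $\mathcal{H}$. Hence, with probability at least $1 - \delta$, the inequality $R(h) < \hat{R}(h) + t_h$ holds \emph{simultaneously} for all $h \in \mathcal{H}$, which is precisely \eqref{eq:bound}. The uniformity over $h$ is the point that makes the bound usable: it may be instantiated at a hypothesis chosen after seeing the data.

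The one genuinely delicate step — the main obstacle — is that the union bound must range over a countably infinite collection, so the finite-class allocation $\delta/|\mathcal{H}|$ is unavailable; the prior-weighted allocation $\delta_h = \delta P(h)$ is exactly what makes the series $\sum_h \delta_h$ converge and summable to at most $\delta$, and it is the source of the $\log(1/P(h))$ term in place of a cardinality term. The remaining ingredients — checking the i.i.d./independence bookkeeping so that Hoeffding applies for each fixed $h$, and the boundedness of $R(h,x)$ in an interval of width $\Delta$ — are routine consequences of the stated assumptions.
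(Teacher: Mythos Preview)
Your proposal is correct and follows essentially the same approach as the paper: Hoeffding for each fixed $h$, then the prior-weighted allocation $\delta_h = \delta\,P(h)$ of the failure probability, followed by a union bound over the countable $\mathcal{H}$ so that the total failure probability sums to at most $\delta$. The paper's write-up additionally spells out that $\hat{R}(h^*(\{x\}))$ for a data-dependent $h^*$ is no longer a sum of independent variables, which is why the union over all $h$ is needed --- a point you also make, though more briefly.
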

This bound is related to the finite hypothesis bound, but includes a prior $P(h)$ and a \emph{countable} rather than finite hypothesis space \citep[Ch 7.3,][]{shalev2014understanding}. We can think of the prior as a weighting function that weights certain hypotheses more highly than others. Importantly, we can use any prior 
to evaluate the bound: it need not have generated the true hypothesis for the data, contain the true hypothesis, or even be used by the model that is trained to find some hypothesis $h^*$. If the model uses a prior quite different from the prior used to evaluate Eq.~\eqref{eq:bound}, then the bound will simply become loose. We include an elementary proof of this bound in Appendix~\ref{app:bound}.

We can derive informative bounds through a Solomonoff prior $P(h) = 2^{-K(h|A)}/Z$ \citep{solomonoff1964formal}, where $K$ is the prefix-free Kolmogorov complexity of $h$ taking as input model architecture $A$, and the normalizing constant $Z \leq 1$ by the Kraft inequality \citep{kraft1949device}.
Substituting this prior into Eq.~\eqref{eq:bound},
\begin{align}
    \textcolor{boundpurple}{\overbrace{{R(h)}}^{\text{expected risk}}} 
    \leq 
    \textcolor{emblue}{\overbrace{{\hat{R}(h)}}^{{\text{empirical risk}}}} 
    + \quad 
    \textcolor{compbrown}{\overbrace{\Delta \sqrt{\frac{K(h|A)\log 2 + \log \frac{1}{\delta}}{2n}}}^{\text{compression}}}\,.
    \label{eqn: kbound}
\end{align}

The prefix-free \emph{Kolmogorov complexity} of hypothesis $h$, $K(h)$, is the length of the shortest program that produces $h$ for a fixed programming language \citep{kolmogorov1963tables}. While we cannot compute the \emph{shortest} program, 
we can absorb the architecture and any constant not determined by the data into the prior, by working with $K(h|A)$. We can then convert from the prefix-free to standard Kolmogorov complexity, to compute the upper bound
\begin{align}
 \log 1/P(h) &\le K(h|A)\log 2 \label{conditionalk}  \\
 & \le  C(h)\log2+2\log C(h) 
\label{eq:upper}
\end{align}
where $C(h)$ is the number of bits required to represent hypothesis $h$ using some pre-specified coding. Therefore even large models with many parameters that represent hypotheses with a low empirical risk and a small compressed size can achieve strong generalization guarantees.  

\emph{PAC-Bayes bounds} can further reduce the number of bits required from $\log_2\frac{1}{P(h)}$ to  ${\mathbb{KL}(Q \parallel P)}$ by considering a distribution of desirable solutions $Q$. If we are agnostic to the specific element of $Q$ we sample, we can recover bits that could then be used to encode a different message. Since PAC-Bayes bounds with a point-mass posterior $Q$ can recover a bound similar to Eq.~\eqref{eq:bound} \citep{lotfi2022bayesian}, we will sometimes refer to both bounds as PAC-Bayes.
We also note that \emph{marginal likelihood}, which is the probability of generating the training data from the model prior, directly corresponds to a PAC-Bayes bound \citep{germain2016pac, lotfi2022bayesian}.

These generalization frameworks have been adapted to provide \emph{non-vacuous generalization guarantees on models that have millions, or even billions, of parameters}. They apply to deterministically trained models, and have also been adapted to LLMs, to accommodate the unbounded bits-per-dimension (nats-per-token) loss, stochastic training, and dependence across tokens \citep{lotfi2023non, lotfi2024unlocking, finzi2025}. Moreover, \emph{computing these bounds is straightforward}. For example: (i) train a model to find hypothesis $h^*$, using any optimizer; (ii) measure the empirical risk $\hat{R}(h^*)$ (e.g., training loss); (iii) measure the filesize of the stored model for $C(h^*)$; (iv) substitute Eq.~\eqref{eq:upper} into Eq.~\eqref{eqn: kbound}.

In words, we can interpret these generalization bounds as:
\begin{align}
\textcolor{boundpurple}{\text{Expected Risk}} \leq \textcolor{emblue}{\text{Empirical Risk}} + \textcolor{compbrown}{\text{Model Compressibility}} \notag
\end{align}
where compressibility provides a formalization of complexity. In Figure~\ref{fig:boundfigure}, adapted from \citet{lotfi2023non}, we visualize how each term contributes to the bound. This representation of the bounds also provides a \emph{prescription} for building general-purpose learners: combine a flexible hypothesis space with a bias for low Kolmogorov complexity. A flexible model will be able to achieve low empirical risk (training loss) on a wide variety of datasets. Being able to compress these models will then provably lead to good generalization. \citet{goldblum2024} show that neural networks, especially large transformers, tend to be biased towards low Kolmogorov complexity, and so is the distribution over real-world data. For this reason, a single model can achieve good generalization over many real-world problems. 

Indeed, even within a \emph{maximally flexible hypothesis space} consisting of all possible programs, if we choose a hypothesis that fits the data well and has low complexity then we will be guaranteed to generalize by the countable hypothesis bound in Eq.~\eqref{eq:bound}. We can relate this insight to \emph{Solomonoff induction}, which provides a maximally overparametrized procedure, with no limit on the complexity or number of parameters a hypothesis can have, but formalizes an ideal learning system \citep{solomonoff1964formal, hutter2000theory}. By assigning exponentially higher weights to simpler (shorter) programs, Solomonoff induction ensures that even though the hypothesis space is enormous, the chosen hypothesis will be simple if it fits the data well. 

In general, there are \textbf{common misconceptions about PAC-Bayes and countable-hypothesis bounds.} For example, they do apply to models with deterministic parameters, rather than only distributions over parameters. Moreover, recent bounds become tighter, not looser, with larger models. We discuss several misconceptions in Appendix~\ref{sec: misconceptions}.
It is also worth noting that these bounds are not only non-vacuous for large neural networks, but also can be surprisingly tight. For example, \citet{lotfi2022pac} upper bound the classification error of a model with millions of parameters on CIFAR-10 at 16.6\% with at least 95\% probability, which is fairly respectable performance on this benchmark.

\subsection{Effective Dimensionality}
\label{sec: effdim}

Effective dimensionality provides a useful intuition for explaining generalization phenomena. The \emph{effective dimensionality} of a matrix $A$ is 
$N_{\text{eff}}(A) = \sum_i \frac{\lambda_i}{\lambda_i + \alpha} \,,$
where $\lambda_i$ are the eigenvalues of $A$, and $\alpha$ is a regularization parameter. The effective dimensionality measures the number of relatively large eigenvalues. The effective dimensionality of the Hessian of the loss, evaluated for parameters $w$, measures the number of sharp directions in the loss landscape --- the number of parameters determined from the data. 

Solutions with lower effective dimensionality are \emph{flatter}, meaning that the associated parameters can be perturbed without significantly increasing the loss. Flatness is not the only factor influencing generalization, and flatness as measured by the Hessian is not parametrization invariant meaning it is easy to find and construct examples where flatter solutions do not generalize better \citep[e.g.,][]{dinh2017sharp}. On the other hand, many standard procedures are not parametrization invariant (e.g., SGD and $\ell_2$ regularization), and the connection between flatness and generalization is not a spurious empirical association. We have a mechanistic understanding of why flatness can lead to better generalization: flatter solutions are more compressible, have better Occam factors, tend to lead to wider decision boundaries, and tighter generalization bounds \citep{hinton1993keeping, hochreiter1997flat, mackay2003information, keskar2016large, izmailov2018averaging, foret2020sharpness, maddox2020rethinking}. 

Like Rademacher complexity, the effective dimension is not a generalization bound in itself, but it is an intuitive quantity that can be formally incorporated into generalization bounds \citep{mackay2003information, dziugaite2017computing, maddox2020rethinking, jiang2019fantastic}. It is also closely related to other concepts that frequently arise in explaining generalization phenomena, such as the \emph{effective rank} of a model \citep{bartlett2020benign}, and sloppy models \citep{quinn2022information}.

We will return to effective dimensionality for intuition when discussing generalization phenomena.

\subsection{Other Generalization Frameworks}
\label{sec: other}

\emph{Rademacher complexity} \citep{bartlett2002rademacher} exactly measures the ability for a model to fit uniform $\{+1,-1\}$ random noise. Similarly, the \emph{VC dimension} \citep{vapnik1994measuring} measures the largest integer $d$ such that the hypothesis space $\mathcal{H}$ can fit (``shatter'') any set of $d$ points with $\{+1,-1\}$ labels. The \emph{fat-shattering dimension} \citep{alon1997scale} $\text{fat}_\gamma(\mathcal{H})$ refines the VC dimension to fitting (``shattering'') labels by some margin $\gamma$. Unlike PAC-Bayes, all of these frameworks penalize the \emph{size} of the overall hypothesis space $\mathcal{H}$, suggesting a prescription for \emph{restriction biases}, rather than the \emph{soft inductive biases} of Section~\ref{sec: soft}. We discuss these frameworks further in Appendix~\ref{sec: othergen}, with a comparative summary in Table~\ref{tab:generalization_bounds}. We note that while Rademacher complexity in itself does not describe these phenomena, it could be combined with a prior that would lead to informative bounds for expressive model classes.

\section{Benign Overfitting}
\label{sec: benign}

\emph{Benign overfitting} describes the ability for a model to fit noise with no loss, but still generalize well on structured data. It shows that a model can be \emph{capable} of overfitting data, but won't tend to overfit structured data. The paper \emph{understanding deep learning requires re-thinking generalization} \citep{zhang2016understanding} drew significant attention to this phenomenon by showing that convolutional neural networks could fit images with random labels, but generalize well on structured image recognition problems such as CIFAR. The result was presented as contradicting what we know about generalization, based on frameworks such as VC dimension and Rademacher complexity, 
and distinct to neural networks. The authors conclude with the claim: \emph{``We argue that we have yet to discover a precise formal measure under which these enormous models are simple.''}
Five years later, the authors maintain the same position, with an extended paper entitled \emph{understanding deep learning (still) requires re-thinking generalization} \citep{zhang2021understanding}. Similarly, \citet{bartlett2020benign} note \emph{``the phenomenon of benign overﬁtting is one of the key mysteries uncovered by deep learning methodology: deep neural networks seem to predict well, even with a perfect ﬁt to noisy training data.''} 

However, benign overfitting behaviour can be reproduced with other model classes, can be understood intuitively, and is described by rigorous frameworks for characterizing generalization that have existed for decades. 

\paragraph{Intuition.} Intuitively, in order to reproduce benign overfitting, we just need a flexible hypothesis space, combined with a loss function that demands we fit the data, and a simplicity bias: amongst solutions that are consistent with the data (i.e., fit the data perfectly), the simpler ones are preferred. For a moment, consider regression, and the simple polynomial model with order-dependent regularization in Section~\ref{sec: soft}. In our likelihood, we will drive $\sigma$ to a small value, so the model will prioritize fitting the data (squared error is multiplied by a large number). However, the model strongly prefers using the lower order terms, since the norms of coefficients are increasingly penalized with the order of the coefficient. Simple structured data will be fit with simple structured compressible functions that will generalize, but the model will adapt its complexity as needed to fit the data, including pure noise, as shown in Figure~\ref{fig:phenomena} (top). In other words, if understanding deep learning requires rethinking generalization, then understanding this simple polynomial does too, \emph{for this polynomial exhibits benign overfitting!}

\paragraph{Formal generalization frameworks.} 
Benign overfitting is also characterized by PAC-Bayes and countable hypothesis bounds, which are formal and long-standing frameworks for characterizing generalization. We can evaluate these bounds for neural networks that exhibit benign overfitting, providing non-vacuous generalization guarantees \citep{dziugaite2017computing, zhou2018, lotfi2022pac}. Moreover, as we describe in Section~\ref{sec: generalization}, these generalization frameworks can precisely define how large neural networks are simple, through Kolmogorov complexity. In fact, larger neural networks often have an even stronger bias for low Kolmogorov complexity solutions \citep{goldblum2024}.

\paragraph{Mix of signal and noise.} The ability to fit a mix of signal and noise, but still achieve respectable generalization, can also be reproduced and is characterized by the generalization frameworks in Section~\ref{sec: pacbayes}. In particular, we can \emph{exactly} reproduce the mixed noisy-label experiment in \citet{zhang2021understanding} for CIFAR-10 in Figure~\ref{fig:phenomena}(d)(e), following \citet{wilson2020bayesian}. Here a Gaussian process (GP) is fit to CIFAR-10 with no training error but increasing numbers of altered labels. Generalization is reasonable, and steadily degrades with increasing numbers of altered labels. Importantly, both the GP and ResNet marginal likelihoods decrease, and the marginal likelihood directly aligns with PAC-Bayes generalization bounds \citep{germain2016pac}.

\paragraph{Research on benign overfitting.} There is by now a large body of work studying and reproducing benign overfitting with other model classes. Yet the conventional wisdom of benign overfitting as a mysterious and deep learning specific phenomenon, one that \emph{still} requires rethinking generalization, persists. It is not our intention, nor would it be possible, to cover all of this work here, but we note some of the key developments. \citet{dziugaite2017computing} show non-vacuous and vacuous PAC-Bayes bounds for neural networks trained on structured and noisy MNIST, respectively.
\citet{smith2018bayesian} demonstrate benign overfitting for logistic regression on MNIST, interpreting the results using Bayesian Occam factors \citep{mackay2003information}.
Several studies analyze two-layer networks \citep[e.g.,][]{cao2022, kou2023}. \citet{wilson2020bayesian} exactly reproduce the experiments in \citet{zhang2016understanding} with Gaussian processes and Bayesian neural networks, and explain the results using marginal likelihood. \citet{bartlett2020benign} show that linear regression models can reproduce benign overfitting. 
They understand this phenomenon by studying the rank of the data covariance matrix, and minimum-norm least squares solutions.

\paragraph{Conclusion.} \emph{Understanding deep learning (still) requires rethinking generalization} \citep{zhang2021understanding} proposes the test:
 \emph{``For any purported measure of generalization, we can now compare how it fares on the natural data versus the randomized data. If it turns out to be the same in both cases, it could not possibly be a good measure of generalization for it cannot even distinguish learning from natural data (where generalization is possible) from learning on randomized data (where no generalization is possible).''} PAC-Bayes and the countable hypothesis bounds clearly pass this test, and also provide a ``precise formal measure under which these enormous models are simple'',
 while Rademacher complexity and VC dimension do not. Moreover, this generalization behaviour is intuitively understandable from the perspective of soft inductive biases, embracing a flexible hypothesis space combined with a compression bias.
 
\section{Overparametrization}
\label{sec: overparametrization}

Now that we have covered soft biases, and benign overfitting, it is likely becoming increasingly intuitive that a model with many parameters will not necessarily overfit the data. Parameter counting, in general, is a poor proxy for model complexity. Indeed, before the resurgence of deep learning in 2012, it was becoming commonplace to embrace models with many parameters:
 \emph{``it is now common practice for Bayesians to fit models that have more parameters than the number of data points...''} \citep{mackay1995probable}.

We are not interested in the parameters in isolation, but rather how the parameters 
control the properties of the \emph{functions} we use to fit the data. We have already seen how arbitrarily large polynomials do not overfit the data, as long as they have a simplicity bias. Gaussian processes also provide compelling examples. A GP with an RBF kernel can be derived from an infinite sum of densely dispersed radial basis functions $\phi_i$: $f(x,w) = \sum_{i=1}^{\infty} w_i \phi_i(x)$  \citep{mackay98}. Similarly, using central limit theorem arguments, we can derive GP kernels corresponding to \emph{infinite} single and multi-layer neural networks \citep{neal1996priors, lee2017deep, matthews2018} (the first of these being an infamous NeurIPS rejection!). Indeed, GPs are typically more flexible than any standard neural network, but often have their \emph{strongest} performance relative to other model classes on \emph{small} datasets, due a strong (but soft) simplicity bias.

\subsection{Is the success of overparametrization surprising?}

There is seemingly little consensus on whether \emph{overparametrization} is in fact surprising. On the one hand, it is known and understood within certain circles that models with an arbitrarily large number of parameters can generalize; indeed, pursuing the limits of large models has been a guiding principle in non-parametrics for decades \citep[e.g.,][]{mackay1995probable, neal1996, rasmussen00, rasmussen2000occam, bealhmm2001, rasmussen2002infinite, griffiths2005, rasmussen2006}. At the same time, overparametrization has been a defining feature of neural networks. And many papers, especially theory papers, open by exclaiming surprise that deep neural networks can generalize given that they have more parameters than datapoints, particularly in light of benign overfitting: e.g., \emph{``A mystery about deep nets is that they generalize despite having far more parameters than the number of training samples...''} \citep{arora2018}. Moreover, many generalization bounds also become increasingly loose, and eventually vacuous, as we increase the number of parameters \citep{jiang2019fantastic}. 

However, more recently, there have also been generalization bounds that become \emph{tighter} as we increase the number of parameters \citep{lotfi2022pac, lotfi2024unlocking}. While LLMs are in many cases not overparametrized, parameter counting is more prevalent than ever. And presentations of \emph{double descent} (Section~\ref{sec: double}) are often based on parameter counting. 

\subsection{Why does increasing parameters help performance?} 

There are two reasons, flexibility and compression. We have discussed how models with high flexibility and a compression bias will provably provide good generalization (Section~\ref{sec: generalization}). Increasing the number of parameters in a neural network straightforwardly increases its flexibility. Perhaps more surprisingly, increasing the number of parameters also increases a compression bias: that is, \emph{models with more parameters can be stored with less total memory after training than models with fewer parameters after training}. 

\citet{maddox2020rethinking} found that larger models after training had fewer \emph{effective parameters} than smaller models, by measuring \emph{effective dimensionality} of the Hessian (Section~\ref{sec: effdim}). In more recent work, \citet{goldblum2024} also show that larger language models have a stronger simplicity bias --- they generate sequences with lower Kolmogorov complexity --- and that this bias is an important feature in good performance and good in-context learning across multiple different settings and modalities. 

\begin{figure}[!h]
    \centering
    \includegraphics[width=0.4\textwidth]{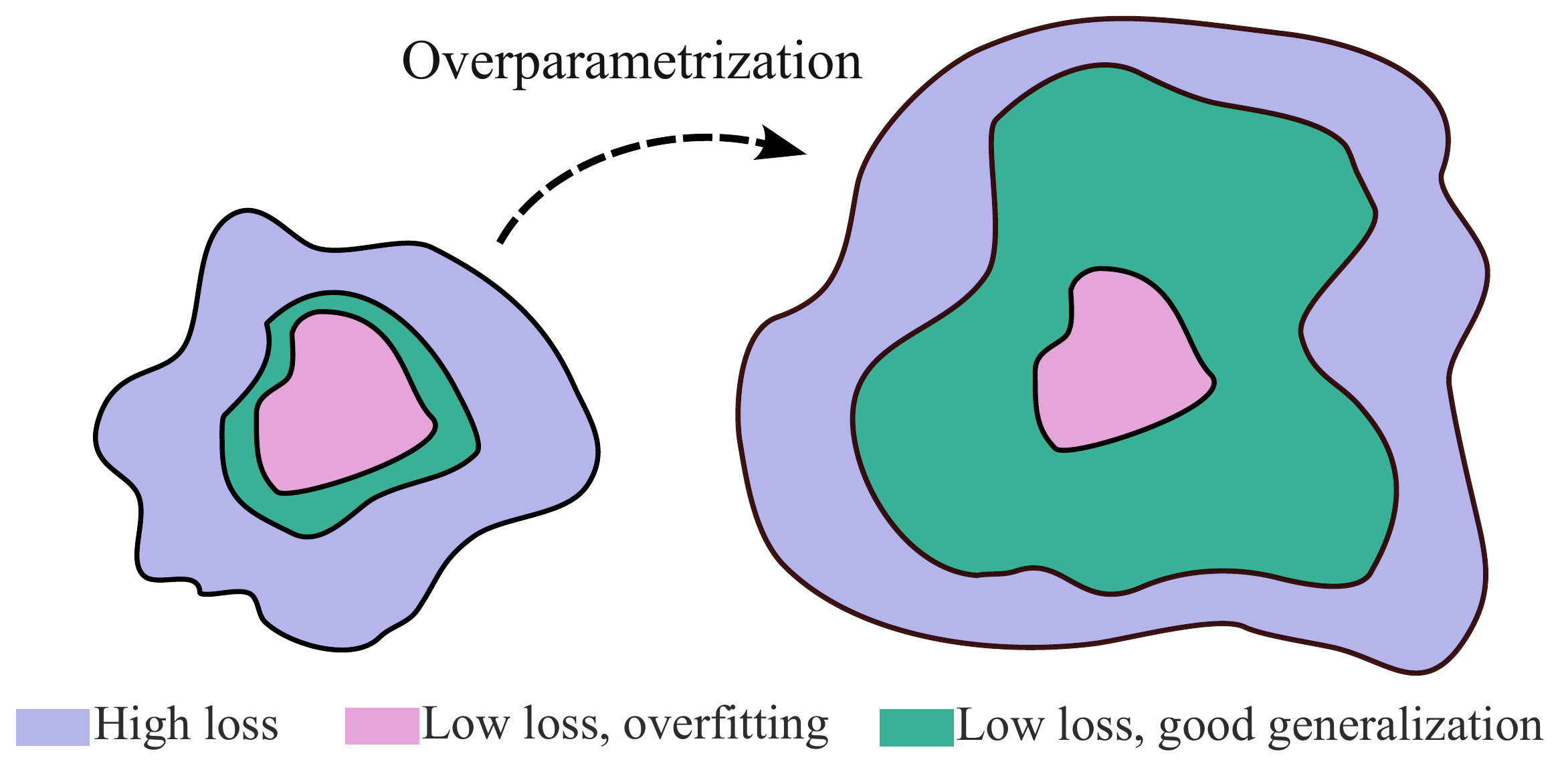}
    \caption{
        \textbf{Increasing parameters improves generalization.} By increasing the number of parameters, flat solutions, which typically provide simpler compressible explanations of the data, occupy a greater relative volume of the total hypothesis space --- leading to an implicit soft inductive bias for these simple solutions. Even though overparametrized models often represent many hypotheses (e.g., parameter settings) that overfit the data, they can represent many more that fit the data well and provide good generalization. Overparametrization can simultaneously increase the size of the hypothesis space, and the bias for simple solutions.}
    \label{fig:overparam}
\end{figure}

But why do larger models appear to have a stronger compression bias? While this is a fascinating open question, there are some clues and intuitions. \citet{bartlett2020benign} show that overparametrized least-squares models increasingly favour small-norm solutions with low effective rank (more in Section~\ref{sec: double}). As we increase the number of parameters, we can also exponentially increase the \emph{volume} of flat solutions in the loss landscape, making them more easily accessible \citep{ronny2019understanding}, which is empirically supported by larger models having smaller effective dimensionality \citep{maddox2020rethinking}. This also helps explain why the \emph{implicit biases of stochastic optimization, contrary to common belief, are not necessary for generalization in deep learning}: even though some parameter settings overfit the data, they are vastly outnumbered in volume by the parameter settings that fit the data well and also generalize well. Indeed, \citet{geiping2021stochastic} found that full-batch gradient descent could perform nearly as well as SGD for training large residual networks, and \citet{chiang2022loss} further showed that even \emph{guess and check} --- randomly sampling parameter vectors and stopping once a low-loss solution was found --- can provide competitive generalization with stochastic training.

There is often a perceived tension between flexibility and inductive biases, with the assumption that more flexible models must have weaker inductive biases. But as we have seen, the larger and more flexible models often have \emph{stronger} inductive biases, which we illustrate in Figure~\ref{fig:overparam}. 

\section{Double Descent}
\label{sec: double}

\emph{Double descent} typically refers to generalization error (or loss) that decreases, then increases, then again decreases, with increases in the number of model parameters. The training loss is typically close to zero near the beginning of the second descent. The first decrease and then increase corresponds to a ``classical regime'', where the model initially captures more useful structure in the data, improving generalization, but then begins to overfit the data. The second descent, which gives rise to the name ``double descent'', is referred to as the ``modern interpolating regime''. 

Double descent was introduced to the modern machine learning community by \citet{belkin2019reconciling}, and prominently studied for deep neural networks in \citet{nakkiran2020deep}. It is often considered one of the great mysteries of deep learning, with the second descent challenging the conventional wisdom around generalization. If increasing model flexibility is leading to overfitting in the classical regime, how can further increasing flexibility alleviate overfitting? \citet{belkin2019reconciling} even speculates on reasons for the ``historical absence'' of double descent.

But double descent is hardly a modern deep learning phenomenon. The original introduction of double descent surprisingly dates back three decades earlier, at least to \citet{opper1989basins}, and was also presented in \citet{opper1990ability}, \citet{le1991eigenvalues}, and \citet{bos1993generalization}. 
It can also be understood and reproduced using other model classes. In fact, the \citet{belkin2019reconciling} paper itself demonstrates double descent with random forests and random feature models in addition to two-layer fully-connected neural networks.

Following \citet{maddox2020rethinking}, consider Figure~\ref{fig:phenomena} (bottom left), showing cross-entropy loss on CIFAR-100 with increases in the width of each layer of a ResNet-18, training to convergence. In the underparametrized regime (yellow), train and test loss both decrease, as increases in flexibility enable the model to capture more useful information content in the data, which increases the effective dimensionality (Section~\ref{sec: effdim}) of the Hessian of the trained parameters. In the the transition regime (pink), the training loss is still decreasing, and the information content in the trained parameters of the model is still increasing, leading to a continued increase in effective dimensionality. However, the way the model is capturing structure is leading to some overfitting, increasing the test loss. In the interpolation regime (green), where number of parameters exceed the number of data points, test loss again decreases with increases in parameters. Importantly, in this second descent, all models achieve a perfect fit to the data, and therefore increased flexibility \emph{cannot} be the reason that the models with more parameters achieve better generalization. Instead, as we continue to increase the number of parameters, the volume of compressible flat solutions grows, making these solutions more discoverable during training. The effective dimensionality of the solutions thus decreases, and generalization will improve. 

In Figure~\ref{fig:phenomena} (bottom right) we show double descent for the mean-squared error of a linear model,  $Xw = y$, where $X$ is an $n \times d$ matrix of features, $w$ represents $d$ parameters, and $y$ are the $n$ datapoints. This model uses weakly informative features $y+\epsilon$, where $\epsilon \sim \mathcal{N}(0,1)$. Once $d > n$, the model can interpolate the data perfectly with infinitely many parameter settings $w$. \citet{bartlett2020benign} shows that the least squares solution $w^* = (X^{\top}X)^{-1}X^{\top}y$ provides the minimum $\ell_2$ norm solution amongst parameter settings that perfectly fit the data, favouring simpler models that rely primarily on the most informative directions in the feature space. We can also consider the effective dimensionality of the parameter covariance matrix (inverse Hessian), or, equivalently, the number of relatively small eigenvalues of the Hessian $X^{\top}X$. Using the Marchenko-Pastur distribution in random matrix theory \citep{marchenko1967distribution, dobriban2018high}, we can predict in this setting that the small eigenvalues will get larger as the number of parameters $d$ increases past $n$, which in turn decreases the variance error (in the bias-variance decomposition) of the model \citep{hastie2022surprises}, leading to better generalization. \citet{curth2023} provides a detailed analysis of double descent in classical statistical models, including a generalized notion of effective parameters. 

As with benign overfitting, these notions of simplicity can be formally characterized using countable hypothesis or PAC-Bayes bounds. It is also possible to track double descent with formal PAC-Bayes bounds, as in \citet[][Figure 7]{lotfi2022pac}. In the second descent, larger models achieve similar empirical risk, but can be more compressible.

\section{Alternative Views}
\label{sec: alternative}

The alternative view is that benign overfitting, double descent, and overparametrization, are largely modern deep learning phenomena that require rethinking generalization. 

\emph{How did this alternative view (which is quite mainstream!) arise in the first place?} 

\textbf{The bias-variance trade-off} decomposes expected generalization loss into the expected data fit (the bias) and the expected square difference between fits (the variance), over the data generating distribution. Constrained models tend to have high bias and low variance, and unconstrained models tend to have low bias and high variance, suggesting the ``U shaped'' curve in the classical regime of double descent. Accordingly, textbooks do indeed warn ``a model with zero training error is overfit to the training data and will typically generalize poorly'' \citep{hastie2017elements}. But ``trade-off'' is a misnomer: models such as our order-dependent polynomial in Section~\ref{sec: soft}, or ensembles \citep{bishop06, wilson2020bayesian}, can have low bias and low variance.

\textbf{Rademacher complexity}, which measures the ability for a function class to fit uniform $\pm 1$ labels, will not lead to meaningful generalization bounds for models that perform benign overfitting. Similar reasoning applies to VC and fat-shattering dimensions. But even in the more recent retrospective \emph{``...still requires re-thinking generalization''} \citep{zhang2021understanding} there is only a single sentence on PAC-Bayes: ``where the learning algorithm is allowed to output a distribution over parameters, new generalization bounds were also derived''. 
As we discussed in Section~\ref{sec: generalization}, PAC-Bayes and countable hypothesis bounds can apply to deterministically trained models. They additionally provide a rigorous conceptual understanding of this generalization behaviour, and have existed for many decades. The basic idea behind the bounds is even described in well-known textbooks, for example \citet[][Chapter 7.3]{shalev2014understanding}. However, these frameworks must not have been broadly known or internalized, and the deterministic variants as non-vacuous bounds on large networks became more visible somewhat later, for example in \citet{lotfi2022pac}. 

\textbf{The implicit regularization of neural networks} differs, for instance, from our running example of a large polynomial with order-dependent regularization. However, both types of regularization are examples of soft inductive biases, and we have discussed how increasing the size of a neural network can increase its implicit regularization. Moreover, this implicit regularization is reflected in the generalization frameworks of Section~\ref{sec: generalization}, and characterized by quantities such as effective dimension. Implicit regularization is also not specific to neural networks, and applies to our random feature linear model in Section~\ref{sec: double}. Moreover, contrary to conventional wisdom, the implicit regularization of stochastic optimizers is not likely to play a major role in deep learning generalization, as discussed in Section~\ref{sec: overparametrization}. On the other hand, we are still in the early stages of understanding precisely how and why scale and other factors influence the implicit regularization in neural networks.

Overall, these phenomena are certainly intriguing and worthy of (further) study. But they are not indescribable by every known generalization framework, nor are they specific to deep learning, as is so often claimed. 

\section{What is Different or Mysterious?}
\label{sec: different}

If these phenomena aren't distinct to deep neural networks, then what is? 

Deep neural networks are certainly different from other model classes, and in many ways they are not well understood. Their empirical performance alone sets them apart. Indeed, the substantial disparity in performance between deep convolutional neural networks and the next leading approaches on ImageNet is responsible for renewed interest in (and the subsequence dominance of) this model class \citep{krizhevsky2012imagenet}. But if they are not in fact distinguished by overparametrization, benign overfitting, or double descent, what does make these models different?

To conclude, we briefly highlight some, but surely not all, particularly salient properties and generalization behaviours that are relatively distinctive to neural networks.

\subsection{Representation Learning}
\label{sec: representation}

\emph{Representation learning} is largely what sets neural networks apart from other model classes. What does representation learning actually mean?

Most model classes can be expressed as an inner product of parameters $w$ and basis functions $\phi$: $f(x,w) = w^{\top}\phi(x)$. While the function class may be highly flexible (in some cases more so than any neural network we can fit in memory) \citep{rasmussen2006}, and the basis functions non-linear, the basis functions typically are a priori \emph{fixed}. For example, we may be using a polynomial basis, Fourier basis, or radial basis. Beyond possibly a few hyperparameters, such as the width of a radial basis, the basis functions do not typically have many of their own parameters that are learned from data. Neural networks, by contrast, specify an \emph{adaptive} basis: $f(x,w) = w^{\top}\phi(x,v)$ where $v$ are a relatively large set of parameters to be learned (the weights of the neural network) that significantly control the shape of the basis functions, typically through a hierarchical formulation involving successive matrix multiplications passed through pointwise non-linearities $\sigma$: $f(x,w) =  W_{p+1} \sigma(W_p \dots \sigma(W_2\sigma(W_1 x))\dots)$. Here, $\phi(x,v) = \sigma(W_p \dots \sigma(W_2\sigma(W_1 x))\dots)$, and $v = W_1, \dots, W_p$. 

At first glance, it may seem unnecessary to \emph{learn} basis functions. After all, as we saw in Section~\ref{sec: overparametrization}, we can achieve as much flexibility as we need --- universal approximators --- with fixed basis functions, through kernels. But by \emph{learning} the basis functions, we are effectively learning the kernel --- a similarity metric for our particular problem. Being able to learn a similarity metric is profoundly important for high dimensional natural signals (images, audio, text, \dots), where standard notions of similarity, such as Euclidean distance, break down. This notion of representation learning as similarity learning transcends the standard basis function view of modelling. For example, it also applies to procedures such as k-nearest neighbours (knn), where performance hinges on choosing a fixed \emph{distance measure}, which ideally could instead be learned.\footnote{While $k$-nearest neighbours could be derived from a basis function view, it's not the most natural interpretation.}

To consider a simple example of representation learning, suppose we wish to predict the orientation angle of a face. 
Faces with similar orientation angles may have very different Euclidean distances of their pixel intensities. But the internal representation of a neural network can learn that, for the task at hand, they should be represented similarly. In other words, the Euclidean distances between \emph{deep layers}, rather than \emph{raw inputs}, for faces with similar orientation angles will be similar. This ability to learn similarity metrics is necessary for \emph{extrapolation} --- making predictions far away from the data. Euclidean distances on the raw inputs is perfectly fine if we have enough datapoints distributed densely enough for interpolation to work well: if we have many examples of $59$ and $61$ degree rotations, interpolation will work reasonably well for predicting a $60$ degree rotation. But through representation learning, a neural network will be able to accurately predict a $60$ degree rotation from having seen only distant angles \citep{wilson2016deep}.

Representation learning, however, is \emph{not} unique to neural networks. It's not uncommon to see claims about what neural networks can do that kernel methods cannot \citep[e.g.,][]{allen2023backward}. Nearly always these contrasts are implicitly assuming that the kernel is fixed. But in fact \emph{kernel learning} is a rich area of research \citep{bach2004multiple, gonen2011multiple, wilson2013gaussian, wilson2016deep, belkin2018understand, yang2020feature}. And there is no need to view kernel methods and neural networks as competing. In fact, they are highly complementary. Kernel methods provide a mechanism to use models with an infinite number of basis functions, and neural networks provide a mechanism for adaptive basis functions. There is no reason we cannot have infinitely many adaptive basis functions! \emph{Deep kernel learning} \citep{wilson2016deep} precisely provides this bridge, and was initially demonstrated on the very orientation angle problem we considered here. This approach has recently seen a resurgence of interest for epistemic uncertainty representation that only requires a single forward pass through the network.

Neural networks are also not the only way to do representation learning. In low-dimensional spaces, for example, it can be effective to interpolate on spectral densities (learning the salient frequencies of the data) as a mechanism for kernel learning \citep{wilson2013gaussian, benton2019function}. 

But neural networks are a relatively efficient way to learn adaptive basis functions, especially in high dimensions. It's not entirely clear why, either. Not only do neural networks learn a notion of distance, this distance measure changes depending on where we are in input space $x$ --- it is \emph{non-stationary}. Non-stationary metric learning is notoriously difficult without making certain assumptions that are well-aligned with data \citep{wilson2013gaussian}. Fundamentally, neural networks provide hierarchical representations for data, and these hierarchies are often a natural representation of real-world problems. As we will discuss in the next Section~\ref{sec: universal}, they also provide a strong bias for low Kolmogorov complexity that could align well with natural data distributions.

\begin{figure*}[!h]
  \centering
  {
    \renewcommand{\thesubfigure}{}
    \subfigure[]{
      \includegraphics[width=0.31\linewidth]{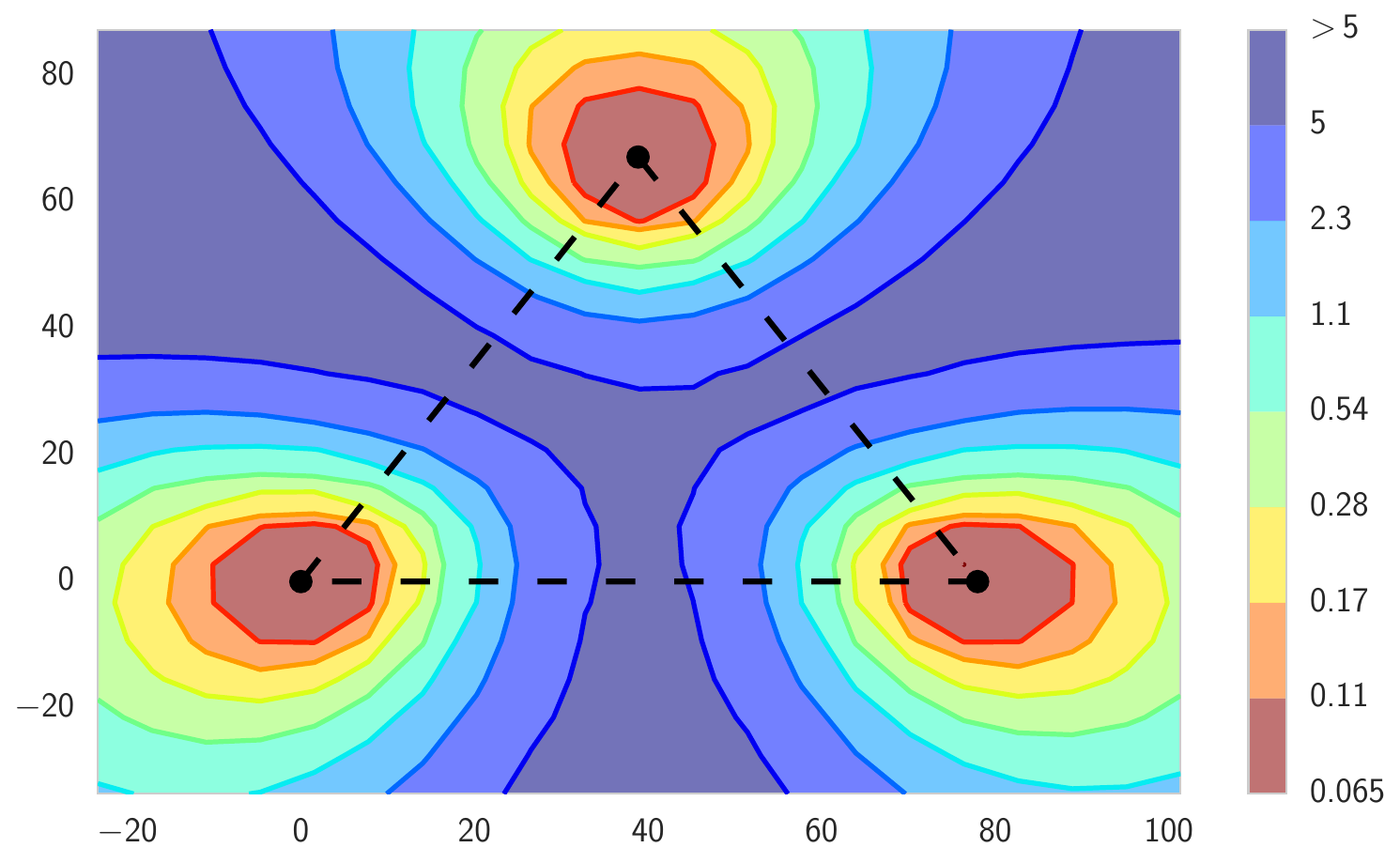} 
    }
    \hfill
    \subfigure[]{
      \includegraphics[width=0.31\linewidth]{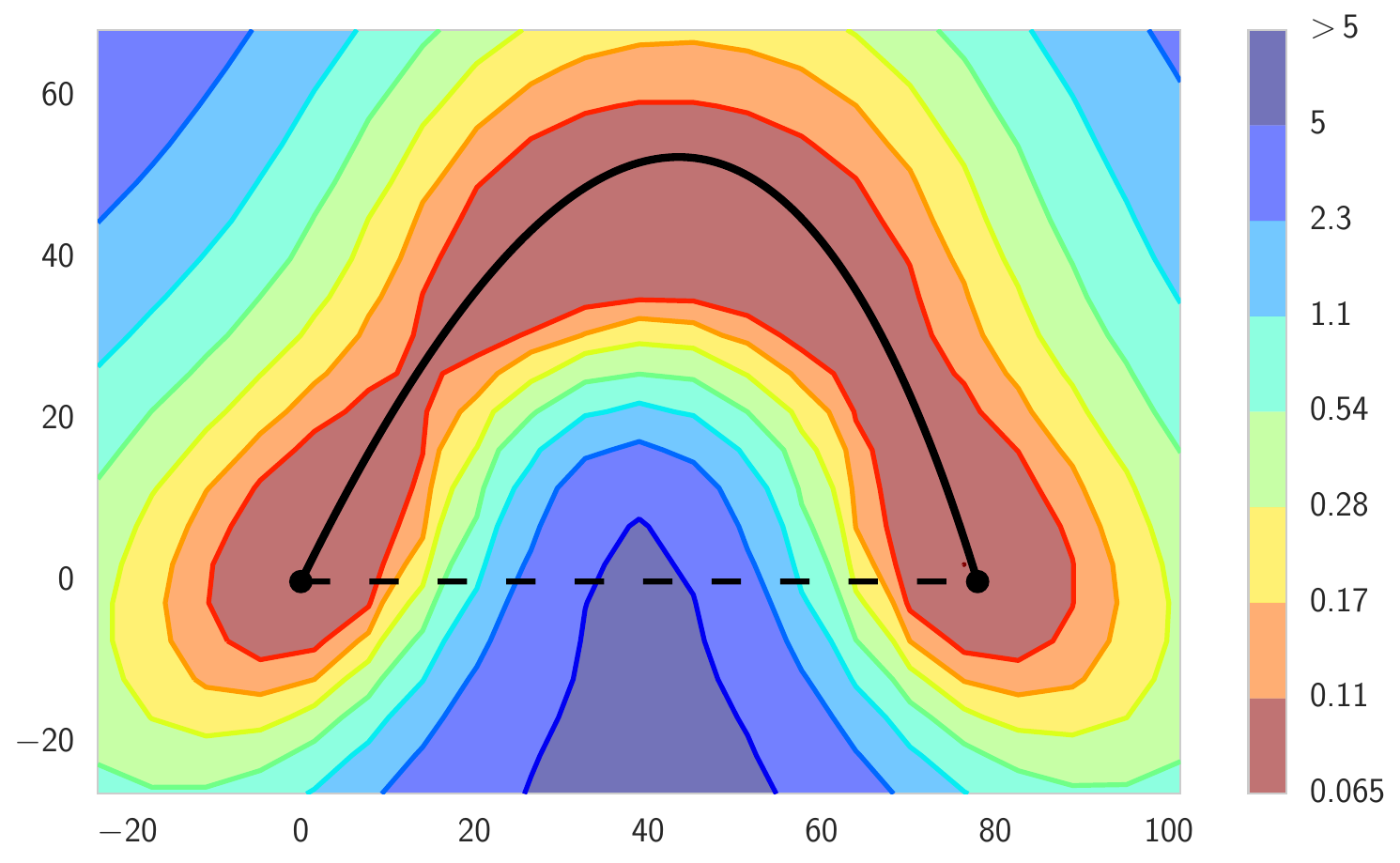}
    }
    \hfill
    \subfigure[]{
      \includegraphics[width=0.31\linewidth]{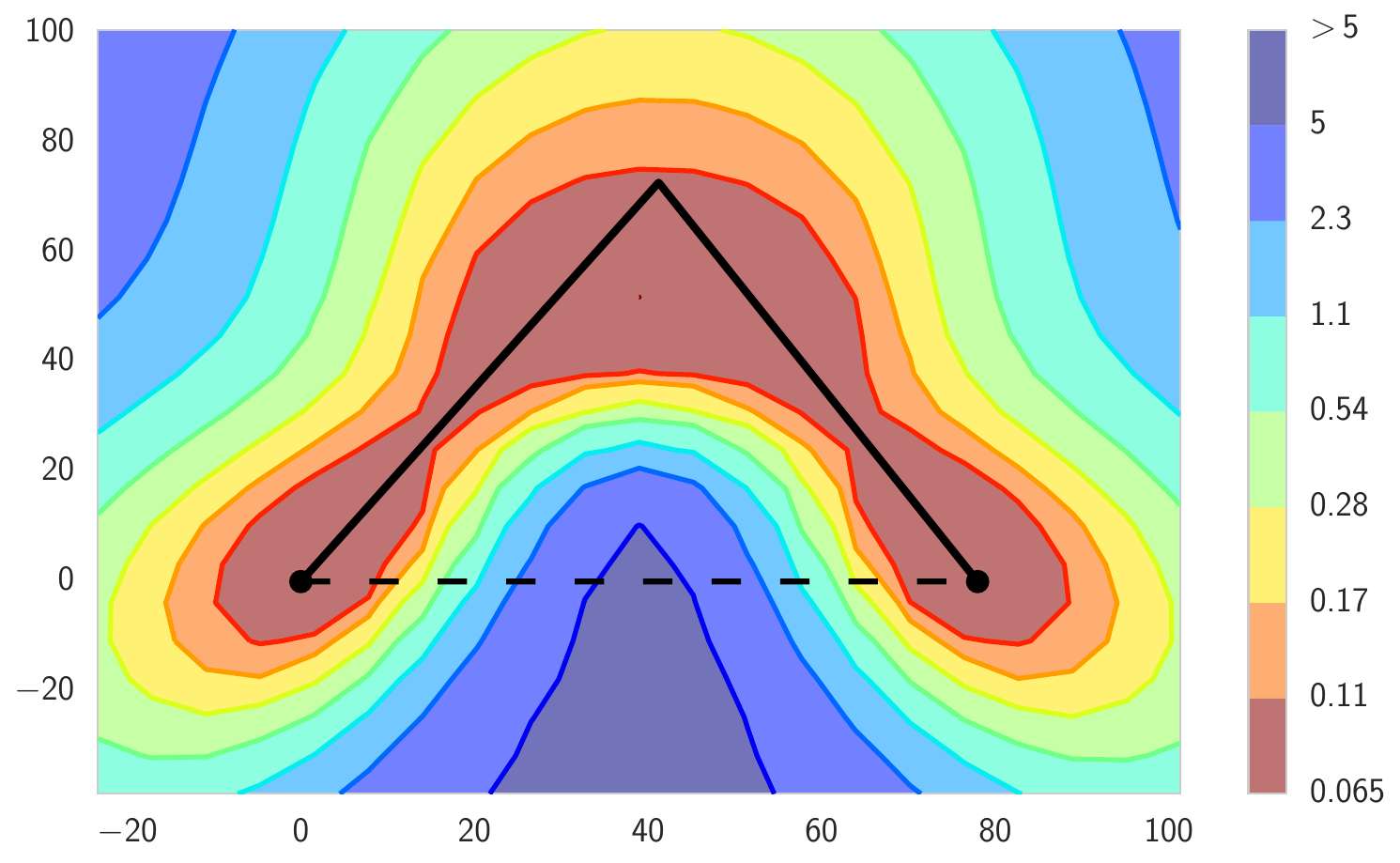}
    }
  }
  \vspace{-7mm}
  \caption{\textbf{Modes in the neural network landscape are connected along curves.} Three different two-dimensional subspaces of the $\ell_2$-regularized cross-entropy loss landscape of a ResNet-164 on CIFAR-100 as a function of network weights. The horizontal axis remains fixed, anchored to the optima of two independently trained networks, while the vertical axis varies across panels. \textbf{Left:} Conventional assumption of isolated optima. \textbf{Middle} and \textbf{Right:} Alternative planes where optima are connected via simple curves while maintaining near-zero loss. \emph{Mode connectivity} is relatively distinct to deep neural networks. Figure adapted from \citet{garipov2018loss}.}
  \label{fig:mode}
\end{figure*}

\subsection{Universal Learning}
\label{sec: universal}

Historically, the conventional wisdom is to build specialized learners with assumptions constrained to specific problem settings. For example, if we are modelling molecules, we could hard-code rotation invariance --- and talk with domain experts to understand the other constraints we want to impose on our model. This approach is often motivated from the \emph{no free lunch theorems} \citep{wolpert1996lack, wolpert1997no, shalev2014understanding}, which say that every model is equally good in expectation over all datasets drawn uniformly. These theorems typically imply that if a model performs well on one problem, it has to perform poorly on other problems, leading to the desire for highly tailored assumptions. 

However, developments in deep learning have run exactly contrary to this conventional wisdom! We have seen a confluence of models --- a move from hand-crafted feature engineering (SWIFT, HOG, etc.), to neural networks specialized to particular domains (CNNs for vision, RNNs for sequences, MLPs for tabular data, \dots), to \emph{transformers for everything}. This result can be explained by both neural networks models, and the distribution of naturally occurring data (rather than data sampled uniformly), having a bias for low Kolmogorov complexity. Surprisingly, even models designed for specific domains, such as convolutional neural networks for image recognition, provably have inductive biases for completely different modalities of data, such as tabular data, due to this bias \citep{goldblum2024}. Starting with a neural network trained on one problem, it is possible to derive non-vacuous generalization bounds for performance on other problems and even other modalities, through upper bounding Kolmogorov complexity.

Indeed, \emph{in-context learning}, the ability for a model to learn without updating its parameters, works distinctly well for neural networks. In some sense many classical models are also performing in-context learning, or something close to it: when we use a Gaussian process with a fixed RBF kernel, condition on some training data, and then sample the posterior predictive, we are doing conditional generation without updating the model representation. But the relative universality of in-context learning for transformers is unprecedented. For example, a standard LLM pre-trained on text completion can surprisingly make competitive zero-shot time series forecasts relative to purpose-built time series models trained on time series data \citep{gruver2024large}!

In other words, not only are neural networks learning rich representations of data, they are learning representations that are relatively universal across real-world problems, compared to other model classes. We emphasize that for in-context learning, these are not \emph{fixed} representations. During pre-training, transformers learn to learn, discovering inductive principles such as Occam's razor \citep{gruver2024large, goldblum2024}. In the GP analogy, we can think of the pre-trained transformer as a large mixture of GP experts, with different kernels. Conditioned on the downstream dataset, the transformer selects the appropriate combination of kernels, based on what it has seen in pre-training.

\subsection{Mode Connectivity}
\label{sec: modeconnectivity}

\emph{Mode connectivity} is a surprising phenomenon that is relatively distinct to neural networks \citep{garipov2018loss, draxler2018essentially, frankle2020linear, freeman2017, adilova2023layerwise}. If we re-train a neural network multiple times with different initializations, it was believed that we would converge to isolated local optima, with significant loss barriers between them. 
However, it was discovered that there are simple paths between these different solutions that maintain essentially zero training loss, as illustrated in Figure~\ref{fig:mode} \citep{garipov2018loss, draxler2018essentially}.
In other words, it is a misnomer to even refer to the converged solutions as local optima! Importantly, the parameter settings along mode connecting curves correspond to different functions that will make different predictions on test points, rather than representing degeneracies in the model specification, such as parameter symmetries.

Mode connectivity has profound implications for understanding generalization in deep learning. Indeed, historically one of the most common objections to deep learning is the extreme multimodality of the loss landscapes (training objectives). Mode connectivity shows instead that the solutions that we are finding in practice are all connected together. Accordingly, understanding mode connectivity and developing practical procedures inspired by this phenomenon has become a vibrant area of research 
\citep[e.g.,][]{kuditipudi2019explaining, frankle2020linear, benton2021loss, zhao2020bridging, ainsworth2022git}.

Mode connectivity has also inspired popular optimization procedures such as stochastic weight averaging (SWA) \citep{izmailov2018averaging}, which in turn inspired model soups \citep{wortsman2022model}, and the area of model merging \citep{ainsworth2022git, yang2024model}.

But, like representation learning, mode connectivity is not entirely unique to neural networks \citep[e.g.,][]{kanoh2024linear}. 
However, mode connectivity is largely a deep learning phenomenon, clearly only applicable to sophisticated non-convex loss landscapes. 

\section{Discussion}

Overparametrization, benign overfitting, and double descent are intriguing phenomena, worthy of (further) study. However, contrary to widely held beliefs, they are consistent with long-standing frameworks for understanding generalization, reproducible using other model classes, and intuitively understandable. Going forward, we hope we can help bring different communities closer together, so that a variety of perspectives and generalization frameworks are less at risk of being overlooked.

\emph{Grokking} and \emph{scaling laws} are other phenomena of recent interest, similarly fascinating and worth understanding further. But unlike the phenomena we consider in this paper, they are not typically presented as evidence we need to re-think generalization frameworks, or as deep learning phenomena. And indeed, it is being shown that scaling laws and grokking apply to linear models \citep{lin2024scaling, atanasov2024scaling, miller2023grokking, levi2023grokking}. Importantly, PAC-Bayes and countable hypothesis bounds are also consistent with large LLMs, as we saw in Figure~\ref{fig:boundfigure}, and recent work even shows that these bounds describe Chinchilla scaling laws \citep{finzi2025}.
 
\textbf{What is the role of the optimizer in deep learning generalization?} There is a conventional wisdom that the green and pink colours in Figure~\ref{fig:overparam} are essentially inverted, and that the main reason deep learning works is because the implicit biases of stochastic optimizers cause them to traverse a relatively small subspace of low loss solutions with good generalization. However, it has been shown that full batch gradient descent, and even \emph{guess and check}, stopping when the loss falls below a threshold, can find solutions with similar or only slightly worse generalization as stochastic optimization \citep{geiping2021stochastic, chiang2022loss}, in alignment with Figure~\ref{fig:overparam} (right). While in principle it is possible for an optimizer to still find bad optima under such a loss landscape, it would have to be actively adversarial. Far from adversarial, stochastic optimization has biases that can indeed improve generalization. But, importantly, these biases are not \emph{necessary} for respectable generalization. Of course, stochastic optimization is much more computationally practical than the alternatives. No one is suggesting we use guess and check! Moreover, developing optimizers which generalize better under a given computational budget is a particularly exciting research direction, especially with recent results showing the rise of second-order optimizers \citep{liu2025muon, vyas2024soap}. Finally, the generalization bounds of Section~\ref{sec: pacbayes} can be evaluated regardless of whether the model uses stochastic optimization, and indeed these bounds track the benign overfitting behaviour of Gaussian processes, which perform Bayesian inference.

\textbf{What is the relationship between structural risk minimization and soft inductive biases?} SRM is a way to encode a soft inductive bias, but is more narrowly focused, and often differently motivated. SRM is often used as a mechanism to reduce VC dimension, trading off data fit with model complexity. It is not typically used as a prescription for arbitrarily flexible models, and indeed model selection tools with priors corresponding to standard $\ell_2$ regularization suggest we should use intermediate order models \citep{bishop06}. A key point in this paper is that we can embrace models that fit data perfectly (including noise) but still have a bias for simplicity. Other ways of implementing soft inductive biases include overparametrization, Bayesian priors and marginalization, the optimizer, and architectural specification.

\textbf{Should we use a different sized model depending on how much pre-training data are available?} Assuming the dataset is fixed (but of arbitrary size), and we want the best performance, then no. Our beliefs about the process that generated the data are typically independent of how many training points we happen to have access to, and we should honestly represent our beliefs in our model. A large model can achieve good performance for all data sizes, as long as it combined with a simplicity bias, as we saw in the polynomial example. 

\textbf{Can restriction biases have computational benefits?} While we are primarily considering the principles of model construction in an idealized setting, where computation is not a constraint, restriction biases such as sparsity and parameter sharing can be a practical design decision for computational reasons. However, even when considering computation, restriction biases can be undesirable. Indeed, recent work shows that parameter sharing can be a poor principle for compute-optimal scaling \citep{potapczynski2024searching}.

\textbf{How can we better understand generalization in deep learning?} There are many fascinating open questions in deep learning generalization. As an approach, we believe it is promising to analyze the solutions neural networks actually reach to explain their behaviours. The generalization bounds of Section~\ref{sec: pacbayes} are fully empirical, non-asymptotic, and can be evaluated using a single sample. We view being able to empirically evaluate the bounds as essential in determining how much of the empirical model behaviour is actually explained by the theory. We have found the Solomonoff prior particularly useful for evaluating descriptive generalization bounds. Solomonoff induction uses a maximally overparametrized model, containing every possible program, but formalizes an ideal learning system that assigns exponentially higher weights to shorter programs. In the future, it would be enlightening to investigate properties of priors that may lead to tighter bounds, ever more closely describing deep learning generalization. It would be particularly exciting to move beyond Kolmogorov complexity as a measure of information content, in order to distinguish between incompressibility due to randomness versus incompressibility due to structural complexity.

\textbf{Acknowledgements:} We thank Shikai Qiu, Pavel Izmailov, Marc Finzi, Gautam Kamath, Micah Goldblum, Alan Amin, Jacob Andreas, Alex Alemi, Lucas Beyer, Mikhail Belkin, Sanae Lotfi, Martin Marek, Sanyam Kapoor, Patrick Lopatto, Preetum Nakkiran, Thomas Dietterich, and Sadhika Malladi for helpful discussions. This work was supported in part by NSF CAREER IIS-2145492, NSF CDS\&E-MSS 2134216, NSF HDR-2118310, BigHat Biosciences, Capital One, and an Amazon Research Award.

\bibliography{refs-understanding, refs-andrew}
\bibliographystyle{icml2025}

\clearpage
\appendix

\section{Common Misconceptions about PAC-Bayes}
\label{sec: misconceptions}

There are several common misconceptions about PAC-Bayes and countable hypothesis bounds.

\textbf{Misconception: PAC-Bayes only applies to stochastic networks}, rather than the deterministically trained networks we use in practice, since it characterizes the expected generalization of a posterior sample. However, the posterior need not be the Bayes posterior, and we can evaluate the PAC-Bayes bound with a point-mass posterior and a discrete hypothesis space: using the relative entropy definition of the KL divergence, $\mathbb{KL}(Q \parallel P) = \mathbb{H}(Q,P) - \mathbb{H}(Q)$, the cross-entropy $\mathbb{H}(Q,P)$ becomes $\log_2\frac{1}{P(h)}$ and the entropy $\mathbb{H}(Q)$ or ``surprise'' in seeing a sample from a point mass $Q$ is zero, recovering a bound very similar to the countable hypothesis bound. Alternatively, the countable hypothesis bound directly applies to deterministically trained models. 

\textbf{Misconception: the countable hypothesis bound doesn't apply to models with continuous parameters.} The neural networks we use are in fact programs on a computer, and therefore must represent a finite hypothesis space. The weights can only take a finite number of values determined by the precision, such as floating point. There is a related misconception that the countable hypothesis bounds must then be loose because there are many hypotheses represented by floating point neural network parameter values. However, the form of the bounds makes clear that we should avoid strictly measuring the number of hypotheses and instead understand generalization from the perspective of which hypotheses are a priori likely. Indeed, these bounds can be tighter for larger models representing more hypotheses \citep{lotfi2023llm}.

\textbf{Misconception: these bounds become loose as we increase the number of parameters.} While many bounds, including some PAC-Bayes bounds, do have parameter counting terms \citep{jiang2019fantastic}, this is not true of all PAC-Bayes or countable hypothesis bounds. Indeed, recent bounds can become tighter with increasing numbers of model parameters \citep{lotfi2022pac, lotfi2023llm, lotfi2024unlocking} because larger models can have a stronger compression bias, leading to a decreased complexity penalty in the bound.

\textbf{Misconception: tight neural network bounds are for unrealistic model compressions.} There is a form of bound, referred to as a \emph{compression bound}, which bounds the generalization of a model whose parameters have been compressed into a lower-dimensional space. It is true that this approach had early success in achieving non-vacuous bounds for larger neural networks on larger datasets \citep{zhou2018, lotfi2022pac}. However, there are a few misconceptions to address: (1) the compression techniques used, such as forming linear subspaces of the parameter space, famously perform often nearly as well as the original model \citep{li2018measuring}. The bounds are often describing a model that is practically compelling, rather than an unrealistic model reduction; (2) the ability to compress larger neural networks into lower dimensional subspaces is informative about generalization; (3) the more recent non-vacuous bounds are not compression bounds, such as the bounds on billion parameter LLMs in \citet{lotfi2024unlocking} and \citet{finzi2025}.

\textbf{Misconception: Kolmogorov complexity is not computable and so generalization bounds based on a Solomonoff prior cannot be evaluated.} The prefix-free Kolmogorov complexity $K(h)$ represents the shortest program in bits to represent $h$ using some pre-specified coding. While we cannot compute the shortest program, we can upper bound the shortest program by the stored filesize of the model and a constant given by terms that do not depend on the data, such as the size of the (e.g., Python) script we use to load and run the model. We can absorb these constant terms that do not depend on the data, represented by $A$, into the Solomonoff prior, by working with $K(h|A)$. We can then in turn upper bound the non prefix-free (standard) Kolmogorov complexity $C$ (conditioned on $A$) by the stored filesize of the trained model to compute informative generalization bounds. 

Incidentally, a profound property of Kolmogorov complexity is that it measures the absolute information independently of the programming language or Universal Turing Machine used. We can write a compiler that translates the code of one language to another without reference to any particular strings. In particular, the \emph{invariance theorem} upper bounds the difference in Kolmogorov complexity under any two Universal Turing Machines by the \emph{shortest possible compiler} \citep{kolmogorov1965, li2008introduction}. Such a compiler would typically be at most on the order of kilobytes, which is negligible compared to typical ML datasets which can be terabytes.

\textbf{Misconception: the bounds only hold if the prior $P(h)$ is not misspecified.} The bound does not require that the prior be used to generate the correct hypothesis, or contain the hypothesis, or even be used by the model we are bounding. It simply provides a mechanism to compute the bound. If, for example, the prior used in the bound favours simple solutions, and the model has a prior that favours complex solutions, we will merely have a looser bound. The assumptions of the bound apply to the models we are using in practice, including, for instance, the CIFAR benign overfitting experiments of \citet{zhang2016understanding}.

\section{Other Generalization Frameworks}
\label{sec: othergen}

\emph{Rademacher complexity} \citep{bartlett2002rademacher} exactly measures the ability for a model to fit uniform $\{+1,-1\}$ random noise. In particular, the Rademacher complexity of a hypothesis space $\mathcal{H}$ and an input sample $\{x_i,\dots, x_n\}$ is $\mathcal{R}(\mathcal{H}) = \mathbb{E}_\sigma \left[ \sup_{h \in \mathcal{H}} \frac{1}{n} \sum_{i=1}^n \sigma_i h(x_i) \right]$, where $\sigma_i$ are i.i.d. Rademacher random variables ($\{+1,-1\}$ with equal probability). The expected risk of a hypothesis $h$ is then bounded as $R(h) \le \hat{R}(h) + 2\mathcal{R}(\mathcal{H}) + C$, where $C$ is a constant defined by the loss function, $n$, and the confidence $1-\delta$ of the bound. Thus, if the model has a hypothesis space $\mathcal{H}$ that can fit the Rademacher noise, then the Rademacher generalization bound will be uninformative --- unless it is adapted to include a prior that assigns higher density to certain solutions over others, much like how we move from a standard finite hypothesis bound with a uniform prior over hypotheses to a countable hypothesis bound with arbitrary prior in Appendix~\ref{app:bound}.

Similarly, the \emph{VC dimension} \citep{vapnik1994measuring} measures the largest integer $d$ such that the hypothesis space $\mathcal{H}$ can fit (``shatter'') any set of $d$ points with $\{+1,-1\}$ labels (e.g., classify these points in all $2^d$ possible ways). If the VC dimension $\mathcal{H}$ is $d$, then the expected generalization error is bounded as 
$R(h) \leq \hat{R}(h) +  \mathcal{O}\left(\sqrt{\frac{d \log(n)}{n}}\right)$.  Thus, models with large hypothesis spaces have uninformative VC generalization bounds.

The \emph{fat-shattering dimension} \citep{alon1997scale} $\text{fat}_\gamma(\mathcal{H})$ refines the VC dimension to fitting (``shattering'') labels by some margin $\gamma$ (or the function having all possible values within some range $[y_i-\gamma,y_i+\gamma]$ for each target $y_i$). The fat-shattering dimension is closely related to Rademacher complexity: 
$\mathcal{R}(\mathcal{H}) \leq c\gamma \sqrt{\frac{\text{fat}_\gamma(\mathcal{H})}{n}}$. We can bound expected generalization as $R(h) \leq \hat{R}(h) + \mathcal{O}\left( \sqrt{\frac{\text{fat}_\gamma(\mathcal{H}) \log(n)}{n}} \right)$. With larger $\gamma$, the fat-shattering dimension $d$ will decrease, as the constraints are harder to satisfy. The ability to fit noise, and a flexible hypothesis space, can be explained by the fat-shattering dimension if the model can only fit noise with small but not larger $\gamma$; however, the fat-shattering dimension is in general difficult to compute for arbitrary neural networks.

We provide a comparative summary of different generalization bounds in Table~\ref{tab:generalization_bounds}.

\begin{table*}[ht]
\scriptsize
\centering
\caption{Summary of Generalization Bounds}
\label{tab:generalization_bounds}
\begin{tabular}{@{}p{1.5cm}p{1cm}p{5.2cm}p{1cm}p{7cm}@{}}
\toprule
\textbf{Bound Type}   & \textbf{Measure}       & \textbf{Generalization Bound}     & \textbf{Introduced} & \textbf{Interpretation}                                                                                   \\ \midrule
\textbf{Rademacher}   & \( \mathcal{R}_n(\mathcal{H}) \) & \( R(h) \leq \hat{R}(h) + 2\mathcal{R}_n(\mathcal{H}) + 3\sqrt{\frac{\log(2/\delta)}{2n}} \)  & 2000s   & Measures expected maximum correlation any $h \in \mathcal{H}$ can achieve with uniform $\{+1,-1\}$ samples. Does not explain overparametrization, benign overfitting, or double descent.                                \\ \midrule
\textbf{VC Dimension} & \( d \)               & \( R(h) \leq \hat{R}(h) + \mathcal{O}\left(\sqrt{\frac{d \log(n)}{n}}\right) \)   & 1990s & Measures number $d$ uniform $\{+1,-1\}$ samples any $h$ can fit. Does not explain overparametrization, benign overfitting, or double descent.                                    \\ \midrule
\textbf{Fat Shattering} & \( \text{fat}_\gamma(\mathcal{H}) \) & \( R(h) \leq \hat{R}(h) + \mathcal{O}\left(\sqrt{\frac{\text{fat}_\gamma(\mathcal{H}) \log(n)}{n}} \right) \) & 1990s & Refines VC for real-valued functions and margin $\gamma$. Possibly describes benign overfitting for larger $\gamma$, but can be hard to evaluate. \\ \midrule
\textbf{PAC-Bayes}    & \( \text{KL}(Q \| P) \) & \( R(h) \leq \hat{R}(h) + \mathcal{O}\left(\sqrt{\frac{\text{KL}(Q \| P) + \log(n/\delta)+2}{2n-1}} \right) \)  & 1990s & Generalization is controlled by which solutions are likely under the prior, rather than size of the hypothesis space. Describes overparametrization, benign overfitting, and double descent.  \\  \midrule
\textbf{Finite \qquad \text{ } Hypothesis} & P(h) & $R(h) \le \hat{R}(h)+\mathcal{O}\left(\sqrt{\frac{\log 1/P(h) + \log 1/\delta}{2n}}\right)$ & 1980s & Generalization is controlled by which solutions are likely under the prior. Applies to deterministic models. Prior can be evaluated through bound on Kolmogorov complexity given by storage space of trained model. Non-vacuous bounds for million and billion parameter neural nets. Bounds often \emph{improve} for larger models. Describes overparametrization, benign overfitting, double descent. \\
\bottomrule
\end{tabular}
\end{table*}

\newpage

\section{Countable Hypothesis Bound}\label{app:bound}

\begin{theorem}
    Consider a bounded risk $R(h,x_i) \in [a,a+\Delta]$ and a countable hypothesis space $h\in \mathcal{H}$ for which we have a prior $P(h)$ that does not depend on $\{x_i\}$. Let the empirical risk $\hat{R}(h) = \frac{1}{n}\sum_{i=1}^n R(h,x_i)$ be a sum over independent random variables $R(h,x_i)$ for a fixed hypothesis $h$. Let $R(h) = \mathbb{E}[\hat{R}(h)]$ be the expected risk.

With probability at least $1-\delta$:
\begin{align}\label{eqapp:bound}
    R(h) \le \hat{R}(h)+\Delta\sqrt{\frac{\log 1/P(h) + \log 1/\delta}{2n}}.
\end{align}
\end{theorem}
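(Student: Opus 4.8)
The plan is the standard route: a concentration inequality for a fixed hypothesis, followed by a prior-weighted union bound over the countable space $\mathcal{H}$. First I would fix an arbitrary $h \in \mathcal{H}$. Since $\hat{R}(h) = \frac{1}{n}\sum_{i=1}^n R(h,x_i)$ is an average of $n$ independent random variables each almost surely in the interval $[a, a+\Delta]$, and $R(h) = \mathbb{E}[\hat{R}(h)]$, Hoeffding's inequality gives the one-sided tail bound
\begin{align}
\Pr\!\left[ R(h) - \hat{R}(h) > \epsilon \right] \le \exp\!\left( -\frac{2n\epsilon^2}{\Delta^2} \right) \notag
\end{align}
for any $\epsilon > 0$. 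Note it is essential here that $P(h)$ does not depend on the sample $\{x_i\}$, so that $h$ is genuinely fixed before the data are drawn.

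The next step is to choose, for each $h$, a failure budget $\delta_h = \delta\, P(h)$, and to solve $\exp(-2n\epsilon_h^2/\Delta^2) = \delta_h$ for $\epsilon_h$, yielding $\epsilon_h = \Delta\sqrt{\tfrac{\log(1/P(h)) + \log(1/\delta)}{2n}}$. Thus for each individual $h$, with probability at least $1 - \delta_h$ we have $R(h) \le \hat{R}(h) + \epsilon_h$, which is exactly the claimed bound \eqref{eqapp:bound} (with $m = n$). Finally I would apply a union bound over the countable collection of ``bad'' events, one per hypothesis: the probability that the desired inequality fails for \emph{some} $h \in \mathcal{H}$ is at most $\sum_{h \in \mathcal{H}} \delta_h = \delta \sum_{h \in \mathcal{H}} P(h) \le \delta$, using that $P$ is a prior (sub)probability distribution on $\mathcal{H}$. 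Hence with probability at least $1 - \delta$ the bound holds simultaneously for all $h$, and in particular for any data-dependent $h^*$ returned by a learning algorithm.

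The only genuinely delicate point, and the one I would be careful to state cleanly, is the countable union bound: countable additivity of probability makes $\Pr[\bigcup_h \text{bad}_h] \le \sum_h \Pr[\text{bad}_h]$ valid even though $\mathcal{H}$ is infinite, and the geometric-series-style convergence is guaranteed precisely because the weights $\delta P(h)$ sum to at most $\delta$. Everything else — the Hoeffding step and the algebra inverting the exponential — is routine. I would also remark that nothing in the argument requires $P$ to be ``correct'' or to be the prior actually used by the trained model; it is merely a fixed weighting function, and a mismatch only inflates $\log(1/P(h^*))$, loosening but not invalidating the bound.
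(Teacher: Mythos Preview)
Your proposal is correct and follows essentially the same route as the paper: Hoeffding's inequality for each fixed $h$, an $h$-dependent failure budget $\delta_h = \delta\,P(h)$ (equivalently an $h$-dependent threshold $t(h)$), and a countable union bound summing to at most $\delta$. The paper also explicitly makes your observation that the uniform bound over $\mathcal{H}$ is what allows it to apply to a data-dependent $h^*$, so there is no substantive difference between the two arguments.
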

\begin{proof}[Proof \citep{lotfi2023llm}]
    As $n\hat{R}(h)$ is the sum of independent and bounded random variables, we can apply Hoeffding's inequality \citep{hoeffding1994probability} for a given choice of $h$. For any $t>0$
    \begin{align*}
        P(R(h) \ge \hat{R}(h) + t)&=P(nR(h) \ge n\hat{R}(h) + nt) \\
        P(R(h) \ge \hat{R}(h) + t)&\le \exp{(-2nt^2/\Delta^2)}.
    \end{align*}
    We will choose $t(h)$ differently for each hypothesis $h$ according to
    \begin{align*}
        \exp{(-2nt(h)^2/\Delta^2)} = P(h)\delta.
    \end{align*}
    Solving for $t(h)$, we have
    \begin{align}
        t(h) = \Delta \sqrt{\frac{\log 1/P(h) + \log 1/\delta}{2n}}
    \end{align}
    This bound holds for a fixed hypothesis $h$. However, for an $h^*(\{x\})$ constructed using the training data, the random variable
    \begin{align*}
        \hat{R}(h^*) = \frac{1}{n}\sum_{i=1}^n R(h^*(\{x\}),x_i) ,
    \end{align*}
    cannot be decomposed as a sum of independent random variables. Since $h^* \in \mathcal{H}$, if we can bound the probability that $R(h)\ge \hat{R}(h)+t(h)$ for \emph{any} $h$, then the bound also holds for $h^*$.

    Applying a union over the events $\bigcup_{h\in \mathcal{H}} \big[R(h)\ge \hat{R}(h)+t(h)\big]$, we have
    \begin{align*}
        P(R(h^*) \ge \hat{R}(h^*) +t(h^*)) &\le P\big(\bigcup_{h\in \mathcal{H}} \big[R(h)\ge \hat{R}(h)+t(h)\big]\big)\\
        &\le \sum_{h\in \mathcal{H}} P\big(R(h)\ge \hat{R}(h)+t(h)\big)\\
        &\le \sum_{h\in \mathcal{H}} P(h)\delta = \delta .
    \end{align*}    Therefore we conclude that for any $h$ (dependent on $x$ or not), with probability at least $1-\delta$,
    \begin{align*}
        R(h) \le \hat{R}(h)+\Delta\sqrt{\frac{\log 1/P(h) + \log 1/\delta}{2n}} .
    \end{align*}
\end{proof}

\section{Experimental Details}

In Figure~\ref{fig:phenomena}(a)(b)(c), we use a 150$th$ order polynomial with order-dependent regularization $\sum_j 2^{j} w_j^2$ (green) to fit regression data generated from (a) $\sin(x)\cos(x^2)$, (b) $x + \cos(\pi x)$, (c) $\mathcal{N}(0,1)$ noise. 

Figure~\ref{fig:phenomena}(d)(e) is adapted from \citet{wilson2020bayesian}, which uses a Gaussian process with an RBF kernel, and a PreResNet-20 and isotropic prior $p(w) = \mathcal{N}(0,\alpha^2I)$ and Laplace marginal likelihood, and in turn replicates the CIFAR-10 noisy label experiment in \citet{zhang2016understanding}.

Figure~\ref{fig:phenomena}(f) is adapted from \citet{maddox2020rethinking} and uses a ResNet-18 with increasing layer width, measures train loss, test loss, and effective dimensionality for $\alpha=1$. Similar to \citet{maddox2020rethinking}, in Figure~\ref{fig:phenomena}(g) we use the random feature least squares model $Xw = y$ with each column of $X_i = y_i + \epsilon$ where $\epsilon \sim \mathcal{N}(0,1)$. 
We measure MSE, and use $\alpha=10$ to compute the effective dimensionality of the parameter covariance matrix (inverse Hessian).

Figure~\ref{fig:boundfigure} is adapted from \citet{lotfi2023llm}, and evaluates the countable-hypothesis bounds with upper bound on Kolmogorov complexity in Section~\ref{sec: generalization} for LLMs of various sizes. 

Figure~\ref{fig:allsizes} fits two 15$th$ order polynomials and one 2$nd$ order polynomial to data generated from a 2$nd$ order polynomial, 15$th$ order polynomial, and $\cos(\frac{3}{2}\pi x)$. One of the 15$th$ order polynomials uses the order-dependent regularization $\sum_j 0.01^2 j^2 w_j^2$. Train and test input locations are sampled from $\mathcal{N}(0,1)$. The number of test samples is $100$ and the number of train samples range from $10$ to $100$. For each train sample size, we re-generate data $100$ times, and record the RMSE and its standard deviation (represented by shade). A similar result was shown in \citet{goldblum2024}.

All other figures are conceptual figures.

\end{document}